\newtheorem{theorem}{Theorem}[section]
\newtheorem{remark}{Remark}[section]
\newtheorem{proposition}[theorem]{Proposition}
\newcommand*{\qed}{\hfill\ensuremath{\blacksquare}}%
\def\R{{\Bbb R}}
\newcommand{\ndof}[0]{n}
\newcommand{\al}[1]{\begin{align} #1 \end{align}}
\newcommand{\nn}{\nonumber}
\begin{document}

\title{Derivative-free online learning of  inverse dynamics  models}

\author{Diego Romeres$^\star$, Mattia Zorzi$^\dagger$,  Raffaello Camoriano$^{\ddagger \diamond}$, Silvio Traversaro$^\diamond$ and  Alessandro~Chiuso$^\dagger$ \thanks{This work has been  supported by the FIRB project ``Learning meets time'' (RBFR12M3AC) funded by MIUR.}
\thanks{$^\star$ Mitsubishi Electric Research Labs (MERL), Cambridge, MA, USA (e-mail: romeres@merl.com). Part of the work has been done at $^\dagger$}
\thanks{$^\dagger$ Dept. of Information  Engineering, University of Padova, Via Gradenigo 6/b, 35131, Padova, Italy (e-mail: \{\tt \small zorzimat,chiuso\}@dei.unipd.it)}
%
%\thanks{$^{\star}$ iCub Facility, Istituto Italiano di Tecnologia, Via Morego 30, Genoa 16163, Italy (e-mail: \tt \small raffaello.camoriano@iit.it)}
%
\thanks{$^{\ddagger}$ LCSL - IIT@MIT, Massachusetts Institute of Technology, Cambridge, MA 02139, USA}
\thanks{$^{\diamond}$  iCub Facility, Istituto Italiano di Tecnologia, Via Morego 30, Genoa 16163, Italy (e-mail:  \{\tt \small raffaello.camoriano,silvio.traversaro\}@iit.it)}
}
\markboth{}{Romeres et al.: Derivative-free online learning of  inverse dynamics models.}

\maketitle

\begin{abstract}
This paper discusses online algorithms for inverse dynamics modelling in robotics. Several model classes including rigid body dynamics (RBD)  models, data-driven models and semiparametric models (which are a combination of the previous two classes)  are placed in a common framework. 
While model classes used in the literature typically exploit joint velocities and accelerations, which need to be approximated resorting to numerical differentiation schemes, in this paper a new ``derivative-free'' framework is proposed that does not require this preprocessing step. 
An extensive experimental study with real data from the right arm of the iCub robot is presented, comparing different model classes and estimation procedures, showing that the proposed ``derivative-free'' methods outperform existing methodologies.  
\end{abstract}

\begin{keywords}
Inverse dynamics, Robotics, Rigid Body Dynamics, Learning, Online methods, Semiparametric models, Derivative-free methods, Gaussian processes, Marginal Likelihood optimisation
\end{keywords}

\section{Introduction} 
\label{sec: intro}

Robotic platforms, such as industrial and service robots,  are becoming more and more popular and are prospected to pervade our everyday life in the near future. A key requirement for such systems is that they should be able to \emph{safely} interact with the environment, possibly including humans,  while \emph{performing robustly and efficiently} certain assigned
tasks;  in order to do so, they have to adapt their behavior to variable conditions.

Among other technological challenges, this requires new mathematical tools for data-driven modeling while accounting for inherent uncertainty and
changing conditions (\emph{learning}). While accomplishing user assigned tasks, the robotic system  acquires new data which needs to be processed \emph{online} to adapt these models.

In this paper we shall be concerned with \emph{online learning} of so called \emph{inverse dynamics models}: these models have joint trajectories as inputs and joint torques as outputs. They are  widely used  for  model-based control  in robotic applications  to improve the tracking performances leading to high accuracy and low control gains \cite{craig2005introduction, nguyen2009model}, \cite{KARIMI,fliess2009model}; see also the survey \cite{Nguyen-Tuong2011} for a comprehensive overview. Inverse dynamics models are also very important for detection and estimation of contact forces, see for instance \cite{IvaldiICRA2015}.

Typically, inverse dynamics models are obtained from  first principles, using the physics of rigid body dynamics (RBD), \cite{siciliano2010robotics}. This results in 
a parametric model that depends on some physical parameters (masses, lengths, etc.). Unfortunately, accurate knowledge of these physical parameters may often not be available; for this reason
an RBD  model should be estimated from data. The main advantage of the parametric approach is that in principle it provides a global  relationship between inputs and outputs. However, the parametric model strongly relies on  several assumptions and may be rather inaccurate when these assumptions are not perfectly satisfied (frictions, nonlinearities, non perfectly rigid links, etc.),  \cite{hollerbach2008model,siciliano2010robotics}.

Alternatively,  a nonparametric black-box model can be estimated from experimental data using machine learning techniques such as Gaussian Process regression \cite{Rasmussen}. The nonparametric framework has the advantage of not requiring unrealistic assumptions, but it comes at the price of being local in nature: the model can only be expected to reflect the systems dynamics  in a ``neighbourhood'' of the trajectories already seen during the learning phase.  In the context of system identification for linear dynamical systems a comparison between the capability of parametric and nonparametric approaches has been presented in \cite{prando2015classical}.

To exploit the advantages of both estimation techniques, semiparametric models have been recently introduced as a combination of RBD and nonparametric models, as for instance  in \cite{ICRA2010NguyenTuong_62320,wu2012semi}. 

In order to endow robotic systems with the ability to adapt to changing conditions, algorithms should be able to process data online, while taking advantage of knowledge already acquired, in the spirit of so-called transfer learning \cite{pan2010survey, bocsi2013alignment}.  Feasibility of online learning strongly hinges on the possibility to keep the computational complexity and memory storage requirements bounded as the number of data grows with time. To this propose several approaches have been proposed in the literature  \cite{Lawrence:2002,NIPS2000_1880}, \cite{Snelson06sparsegaussian}, \cite{Tresp:2000,Williams01usingthe,AE-MY-JH:11,Csato:2002} including algorithms explicitly developed to process data online, such as   \cite{nguyen2011incremental}, which selects  a sparse subset of training data points, and the local Gaussian process regression approach proposed in \cite{nguyen2009model}. 

In  \cite{gijsberts2011incremental} the complexity  is kept constant approximating the kernel function using the so-called ``random features'', \cite{rahimi2007random}, an approximation which will be exploited in this paper following also \cite{nguyen2011incremental, SEMIPARAMTERIC_2016,romeres2016onlineIcub}. It is worth noting that other approximation techniques are available in the literature, for instance: subset of data approximation, subset of regressors approximation,
conditional approximations, Nystrom approximation and relevance vector machine approach, see \cite{quinonero2005unifying}. Approximation of the kernel is equivalent to choosing a finite (and fixed) basis expansion for the unknown model; this expansion can be exploited to tackle estimation using Kalman filtering techniques \cite{JH-SS:10}, \cite{NicFer:1998:IFA_1779} and 
 \cite{Huber201485},  \cite{JP-MS:14}.  Online nonparametric learning using Gaussian processes calls also for online tuning of the kernel function, see for instance 
 \cite{romeres2016online,prando2016online} where marginal likelihood optimisation is used in the context of online system identification.

The main contributions of this paper are as follows:
\begin{itemize}
\item  Various models (parametric, nonparametric and semiparametric)  proposed in the literature \cite{ICRA2010NguyenTuong_62320,wu2012semi,SEMIPARAMTERIC_2016} are placed in a common framework through the so-called semiparametric models. This common framework is exploited to compare theoretically the semiparametric model wtih RBD mean (SP) and the semiparametric with RBD kernel (SPK)  and to show (see Section   \ref{sec:semiparametric} for details)  that the latter is to be preferred, complementing the findings from the experimental evaluation. Online learning is performed, following \cite{SEMIPARAMTERIC_2016}, exploiting the random features approach.
\item A new ``derivative-free'' modelling framework, which avoids the use of numerical derivatives, is proposed. In robot modelling, this framework contrasts the errors introduced into the system by computing numerically the velocities and accelerations starting from the measured (noisy) positions.
\item A thorough experimental study, based on real data from the iCub robot, is undertaken; classical methods as well as the newly proposed derivative-free methods are compared and analysed, both in terms of adaptation capabilities (how fast the algorithms can learn the dynamical models after a change in the experiental conditions) as well as in terms of steady state error. The experimental results show that the derivative-free methods proposed in this paper outperform classical schemes based on numerical derivatives. 
In doing so Cross Validation and Marginal Likelihood optimisation methods are compared for estimating the kernel hyperparametrs, suggesting that the latter should be preferred to the former.
\end{itemize}

The paper is organized as follows. In Section \ref{sec:prob_statement} the problem of inverse dynamics modeling is formalized. In Section \ref{sec:semiparametric} parametric, nonparametric and semiparametric models are introduced, while Section \ref{sec:model_approx} deals with kernel approximation which allows for the use of online algorithms. Section \ref{sec:derivative-free} introduces learning methods to avoid the use of numerical derivatives.  In Section \ref{sec: Simulations} the different online algorithms are tested in the inverse dynamics estimation of the robotic platform iCub. Finally, in Section \ref{sec:conclusions} conclusions and future works are drawn.

%----------------------------------------------------------%
\section{Problem Statement}
\label{sec:prob_statement}
%----------------------------------------------------------%

Assume we are given a robotic system with $\ndof$ degrees of freedom (DoF), and denote with $q(t) \in  \mathbb{R}^\ndof$, $t \in \mathbb{R}$,  the free coordinates describing the robot configuration. 
Starting from the laws of physics, which account for gravity, apparent forces, frictions and so on,  it would in principle be possible to write a (direct) dynamical model which, having as inputs the torques $y(t)$ acting on the robot joints,  outputs the  trajectory of the free coordinates (joint positions) $q(t)$. This is the so called ``direct dynamics''. 

However, for the purpose of control design, it is of interest to know which  torques $y(t)$  should be applied  in order to obtain a certain trajectory $q(t)$. 
%This is the purpose of the so-called \emph{ inverse dynamics} modeling}.: \emph{finding a model which, having joint trajectories $q(t)$ as inputs,  outputs the applied torques $y(t)$.}
 This can be achieved for example using inverse dynamics models, which can be  exploited, e.g., to determine the feedforward joint torques $y^d(t)$ which should be applied to follow  a desired trajectory $q^d(t)$; see Figure~\ref{fig_aplication}. \begin{figure}[hbtp]
\centering
\includegraphics[width=0.9\columnwidth]{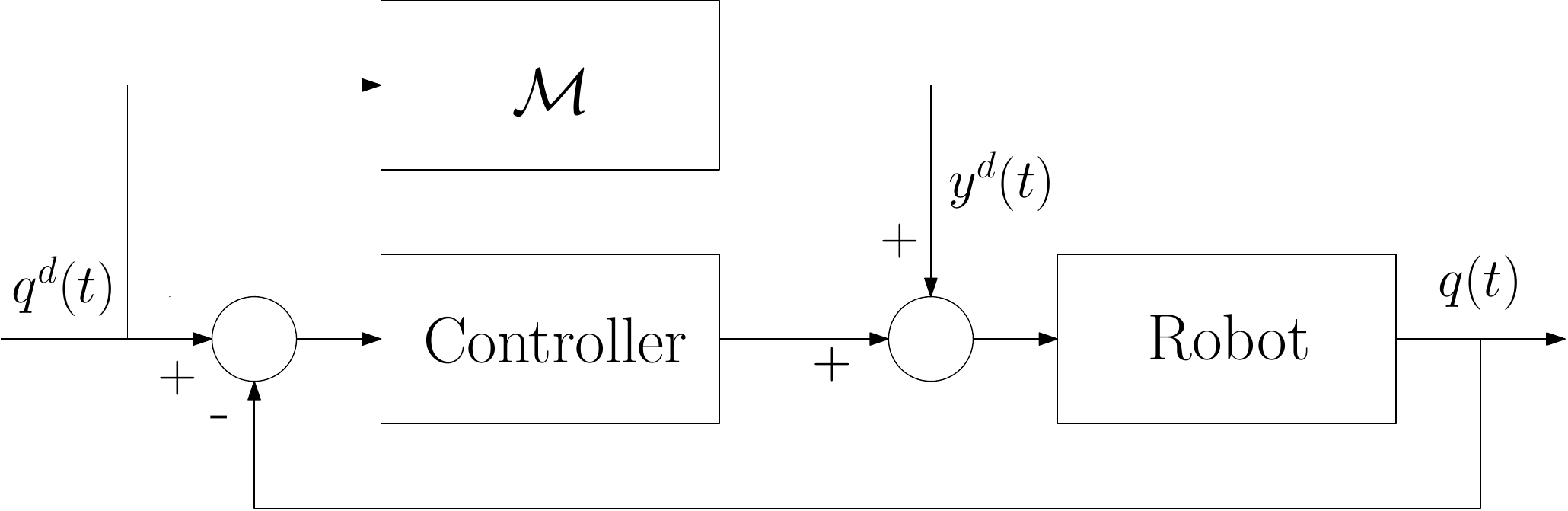}%{control_scheme.eps}
\caption{Schematic for robot motion control.}\label{fig_aplication}
\end{figure} Clearly,  closing  a high performance loop quests for an accurate  inverse dynamics model  $\mathcal{M}$. While control is an important application of inverse dynamics models, it is certainly not their unique use. For instance, inverse dynamics models find important applications in modelling, detecting and estimating contact forces, see e.g.  \cite{IvaldiICRA2015}.

In practice an {\em inverse dynamics model}  should be learned   from a set of measured data $(y(t),q(t))$, with $t=1\ldots N$; this should be possibly performed online, updating the model as new data become available. An overview of model structures used for  inverse dynamics modeling will be provided in Section \ref{sec:semiparametric}  and the approximations involved in  online learning discussed in Section \ref{sec:model_approx}.

% Denoting with $\dot{q}(t)$ and $\ddot{q}(t) \in \mathbb{R}^\ndof$  the joint velocities and accelerations. The inverse dynamics model of a robot \cite{siciliano2010robotics} with $\ndof$ DoF, or more in general of a dynamical system \cite{taylor2005classical}, takes the form %
%\begin{align}
%\label{eq:RBD_general}
%y(t) = M(q(s))\ddot{q}(t) + h(q(s),\dot{q}(t)) \,,
%\end{align}
%%
%where $\tau(s) \in \mathbb{R}^\ndof$ is the torque applied to the robot joints, $\dot{q}(t),\ddot{q}(t) \in \mathbb{R}^\ndof$ are the joint positions, velocities and accelerations, $M(q(t)) \in \mathbb{R}^{\ndof \times \ndof}$ represents the generalized inertia matrix and $ h(q(t),\dot{q}(t))\in \mathbb{R}^\ndof$  accounts for the modeled contributions of Coriolis, centrifugal, centripetal,  gravitational and viscous, static and Coulomb frictions, at time~$t \in \mathbb{R}$.
%
%Typically the Inertia $M(q(t))$ and the function $h(\cdot)$ in the non-linear differential equation   \eqref{eq:RBD_general}  are parametrised and the purpose of and identification experiment is to estimate the parameter from data. However this is a nontrivial issue  from the numerical/optimization point of view; in addition  bias may be present due to undermodeling issues. 

\begin{remark}[ Causality of Inverse Dynamics Models]
Strictly speaking, such models are not proper: the present torques depend on the joint positions, velocities and accelerations which corresponds to having knowledge about future temporal instants.  In this paper we shall either assume velocities and accelerations are given, or causal approximations will be considered, where the present torques only depend on present  and past joint trajectories. If such models have to be used to build feedforward control laws, it is well possible that the future (desired) trajectories are known and thus one can build, starting from data, non-causal models. On the other hand, when online identification is to be performed (which means that the underlying system is time-varying) there is the delay in updating the time-varying model. However, time-variability should be slow and, as such, it should not be a significant drawback. 
\end{remark}

\begin{remark}[ Stability of Inverse Dynamics Models]
A model of a  mechanical system   as a map from joint   torques  $y(t)$ (the inputs) to joint angular velocities $\dot{q}(t)$ (the outputs), is  passive. In the linear case this implies that the transfer from input to outputs is positive real and thus it is minimum phase. Nonlinear extensions of this concept are indeed possible, i.e. a passive nonlinear system has stable zero dynamics, see e.g. \cite{ByrnesIW1991}. Therefore, when considering inverse dynamics for (passive) mechanical systems, stability is not an issue. 
Stability of the inverse model may become critical when non minimum-phase systems are considered (e.g.  non-collocated systems); it would thus be  possible to consider acausal and stable functions of the reference trajectory to build a (stable) inverse dynamics model. This would be compatible with   online implementation provided the reference trajectory is known in a finite and sufficiently long future horizon. We shall not consider the latter case in this paper.
\end{remark}

The typical assumption that joint positions $q(t)$, joint velocities $\dot{q}(t)$, and joint accelerations $\ddot{q}(t)$ are measured considerably simplifies the identification procedure; the latter are stacked in the so-called 
``input location'' vector 
\al{
\label{eq:x}
x(t):=[\,q^\top(t)\, \dot{q}^\top(t)\, \ddot{q}^\top(t)\,]^\top\in\mathbb{R}^m
}
 with $m=3n$. Under this assumption, the inverse dynamics of a rigid body  can be written in a linear form once a suitable overparametrization is introduced, see Section \ref{subsec:par}. Unfortunately, measuring velocities and accelerations is often unrealistic and one has to resort to numerical differentiation schemes, which may be prone to large errors  in the presence of measurement noise. 
 In Section \ref{sec:derivative-free}, we shall analyze an alternative model which  only exploits the past history of the joint positions. In this way, $\mathcal{M}$ is a model with the past history of the joint positions as input and with the applied torques as output.  As we shall see this choice compares favourably with standard approaches in the literature.

%----------------------------------------------------------%
\section{Model Classes}
\label{sec:semiparametric}
%----------------------------------------------------------%

In this Section we briefly review the typical model classes used to learn the inverse dynamics of a robot, that is the linear parametric model, whose structure is given by the physics, the nonparametric model, whose structure is learnt from the data, and the semiparametric models which are a combination of the first two models.  Throughout this Section, Gaussian processes are indexed in $\mathbb{R}^m$, take values in $\mathbb{R}^n$ and are zero mean. The symbol 
$e(t)$ denotes a zero mean white Gaussian noise with covariance matrix $\sigma^2 I_n$. Most of the  models introduced in this Section are equipped with a so-called {\em hyperparameters vector}. We address the problem of estimating this vector in Section \ref{sec:model_approx}.

\subsection{Linear parametric model} \label{subsec:par}
The rigid body dynamics (RBD) of a robot is described by the equation
\begin{equation}
\label{phys_model} 
y(t)= M(q(t)) \ddot{q}(t)+C(q(t),\dot{q}(t)) \dot{q}(t) +G(q(t))
\end{equation}
where $M(q(t))$ is the inertia matrix of the robot, $C(q(t),\dot{q}(t))$ the Coriolis and centripetal forces and $G(q(t))$ the gravity forces, \cite{siciliano2010robotics,taylor2005classical}. The terms on the right hand side of  (\ref{phys_model}) can be rewritten as $\psi^\top(x(t))\pi$ which is linear in the robot (base) inertial parameters vector $\pi\in\R^{p}$. The map $\psi \,: \,\R^m \rightarrow \R^{p\times \ndof}$ is the known RBD regressor which is a nonlinear function of the input locations vector described by $x(t)$. 
Therefore, the RBD model is equivalent to 
\begin{equation}
\label{par_regr_model}
y(t)= \psi^\top(x(t)) \pi+e(t)
\end{equation}
where $e(t)$ includes the nonlinearities of the robot that are not modeled by the rigid body dynamics (e.g. actuator dynamics, frictions, etc.). 

Since the RBD model is physics-based, it should, in principle, describe the robot dynamics for all the desired trajectories, leading to good generalization and global approximation properties. A known issue of this model (see e.g., \cite{hollerbach2008model}) is that the problem of determining $\pi$ from measured data $y(t)$ is usually ill posed and the matrix $\psi(x(t))$ is rank deficient. Possible solutions in system identification are either the design of efficient experiments to collect data sufficiently rich to excite the highest number of modes of the system or dedicated experiments which are good to estimate parameters separately. 

Another drawback of such model class is that it is prone to undermodeling (as it may not capture non-linear friction effects, effects of non-rigidities etc.) that may ultimately result in severely biased estimated models. 

This model will be denoted as ``P'' (Parametric).
 
\subsection{Nonparametric Model}
Following the ``Gaussian Process'' framework \cite{Rasmussen}, the robot inverse dynamics can be modeled postulating that:
\begin{equation}
\label{nonpar_model} 
y(t)=f(x(t))+e(t)
\end{equation}
where $f(\cdot)$ is a Gaussian process with covariance function (i.e. kernel function)\footnote{To be precise, the kernel function and the covariance function coincide up to the scaling factor. However, to ease the exposition we will follow this abuse of terminology.} $\mathrm{cov}[f(x(t)), f(x(s))]=K(x(t),x(s))$:

\begin{equation}
\label{eq:kernel_nonparametrico_icub}
K(x(t),x(s))=\rho^2  K_G(x(t),x(s))I_\ndof.
\end{equation}
 Several choices are possible for the kernel $K$ (see e.g. \cite{Rasmussen,GP-MHQ-AC:11}), which can be exploited to encode specific model structures. However, in this paper we shall only consider   $K$ to be the Gaussian kernel\footnote{ In order to avoid scaling issues, each component of the input location vector $x(t)$ is normalised to have unit standard deviation. Some preliminary tests showed worse performance when using an independent  width for each component of $x(t)$, which is probably to be attributed to overfitting due to the large number of hyperparameters.}
\begin{equation}
\label{gaussian_kernel}
K_G(x(t),x(s))=e^{-\frac{\|x(t)-x(s)\|^2}{2\tau}}
\end{equation}
where the hyperparameter $\tau$ is the kernel width. The latter is a typical choice to correlate the input locations for learning the inverse dynamics, \cite{gijsberts2011incremental,ICRA2010NguyenTuong_62320,wu2012semi}. 
$\rho^2$ plays the role of scaling factor. The vector $\eta:=[\,\rho^2\, \tau\, \sigma^2\,]$ is referred to as {\em hyperparameters vector}. This model will be denoted as ``NP'' (NonParametric).

%Various Kernels could be considered in place of \eqref{gaussian_kernel}, e.g.  introducing structures following \cite{GP-MHQ-AC:11}. However, this is outside the scope of this paper and will be postponed to future work.}

This class of models is known to have high flexibility and prediction performance (see e.g. \cite{Rasmussen}) since the dynamics are extrapolated directly from the experimental data, without making any unrealistic approximation on the physical system (e.g. assuming linear frictions models, ignoring the dynamics of the hydraulic actuators, etc.). Nevertheless,
nonparametric models deteriorate their performance when predicting unseen data which are far ( in the Euclidean metric) from
those visited in the training dataset.

\subsection{Semiparametric model with RBD mean} 
\label{sec:SP}
The semiparametric model is an attempt to take advantage of the global property of the parametric model as well as of  the flexibility of the nonparametric model; the first possibility is to embed the former as a mean term into the latter. Thus, the inverse dynamics is modeled as 

\begin{equation} 
\label{eq:SP-Mean_model_statistics} 
\begin{array}{rl} 
y(t)=& \psi^\top (x(t))\pi+f(x(t))+e(t)
\end{array}
\end{equation}
 where $\pi$ is the vector of inertial parameters, $\psi(x(t))$ is the RBD regressor and  $f(\cdot)$ a Gaussian process with kernel function as in  \eqref{eq:kernel_nonparametrico_icub}. 
 
 At this point two  alternatives are possible. The first and most principled one is to treat $\pi$ as an unknown hyperparameter. Model \eqref{eq:SP-Mean_model_statistics} with this hypothesis will be denoted as ``SP'' (SemiParametric). In this case $\eta:=[\, \pi \, \rho^2 \,\tau\, \sigma^2\,]$. 
 
A suboptimal alternative but often applied in the literature is to assume that $\pi$ is known, here denoted by $\hat\pi$. This could be  possibly estimated using some preliminary experiment as in  \cite{ICRA2010NguyenTuong_62320} or estimated via Least Squares as in \cite{SEMIPARAMTERIC_2016} or it can be given from some expert knowledge.  Model \eqref{eq:SP-Mean_model_statistics} with this hypothesis will be denoted as ``SP2''. In this case $\eta:=[\,  \rho^2 \,\tau\, \sigma^2\,]$.

\subsection{Semiparametric model with RBD kernel} 
%\label{semipar}
An alternative possibility for combining the parametric and nonparametric models is to incorporate the RBD structure in the kernel, \cite{ICRA2010NguyenTuong_62320}. 
The debate as to whether one should account for prior knowledge  as a mean term or as a structured kernel is often encountered in Gaussian Process regression. We shall come back to the relation between these two alternatives in Proposition \ref{proposition:connection_SPK_SPmean} and the discussion which follows that proposition.

Therefore, the inverse dynamics is modeled as
\al{y(t)=f(x(t))+e(t) \label{modelloSPK}}
where $f(\cdot)$ is a Gaussian process with kernel function 

\al{\label{eq:SP-Kernel_model_statistics}  K(x(t), x(s)) & = \gamma^2 \psi(x(t))^\top \psi(x(s)) +\rho^2 K_G(x(t),x(s)) I_n.} This model will be denoted as ``SPK'' (SemiParametric Kernel). The hyperparameters vector is $\eta:=[\,\gamma^2\, \rho^2\, \tau\, \sigma^2\,]$.

It is worth noting that in the case $\rho=0$ we obtain the parametric model (\ref{par_regr_model}) where the inertia parameters vector is now modeled as a Gaussian random vector with zero mean and covariance matrix $\gamma^2 I_p$.

\begin{remark}
 Note that  equation  \eqref{eq:SP-Kernel_model_statistics} is derived assuming  $\pi$ is  a zero mean Gaussian vector with variance $\gamma^2 I $. 
  Of course if prior knowledge was available on the expected scale of the components of $\pi$, this could be included in the prior variance introducing   a diagonal scaling matrix $D$, so that $\pi \sim {\cal N}(0, \gamma^2D)$. 
 
We have also tested an alternative version in which   the diagonal matrix $\gamma^2D$ has been estimated via Marginal Likelihood together with all other hyperparameters. However the overall performance is worse than when assuming $\pi \sim {\cal N}(0,\gamma I)$ and estimating only $\gamma$, which is probably due to the fact that 
% can be explained by the fact the augmented cost function seems to have many local minima, making the solution very sensitive to initial conditions, and 
 this extra freedom results in a less favourable bias-variance tradeoff.
\end{remark}

The semiparametric model with RBD kernel  (SPK) is  connected with the RBD mean (SP) model introduced in Section \ref{sec:SP}. To make this connection sharp we shall refer to both the minimum variance estimators (see Section \ref{sec:model_approx}) obtained with these models, as well as to the log-likelihood functions which can be used to estimate the hyperparameters (see Section \ref{subsec:hyperparameter_estimation}). In particular, let us stack the available data $y(t)$, $t=1\ldots N$ in the vector $\mathbf{y}$ and stack correspondingly  the regressors $\psi^\top(x(t)) $ and $f(x(t))$ in the matrix $\Psi$ and vector $\mathbf{f}$, respectively.  Moreover, we define $\mathbf{K}(\mathbf x,\mathbf x)=\mathrm{cov}[\mathbf f,\mathbf f]$, $L_{SPK}(\mathbf y)$ and $L_{SP}(\mathbf y)$ as the negative marginal  log-likelihoods $-2\log\, p_\eta({\mathbf y})$ of data ${\mathbf y}$ as a function of hyperparameters $\eta$ for models SPK and SP, respectively, and \begin{equation}
\label{eq:profile_marginal_lik}
\hat L_{SP}(\mathbf{y}) : = \mathop{\rm min}_{\pi} L_{SP}(\mathbf{y}) = L_{SP}(\mathbf{y})_{|  \pi= \hat\pi_{WLS}}
\end{equation}
as the \emph{profile} marginal log-likelihood, where $\hat \pi_{WLS}$ denotes a suitable weighted least squares estimate of $\pi$ (see Appendix \ref{AppA} for the precise definition of $\hat \pi_{WLS}$).
The following proposition establishes the link between the semiparametric with RBD  mean (SP) and semiparametric with RBD kernel (SPK) estimators as the prior on $\pi$ becomes uninformative (i.e. as $\gamma^2 \rightarrow \infty$),  showing that also in this asymptotic regime the two models induce a different marginal likelihood  (see eq. \eqref{Diff-SP-SPK}). The take-home message of this proposition is that the SPK model better handles uncertainties and should thus be preferred; a more in depth  discussion  will  be provided in Section \ref{app:discussion}. The reader is also referred to \cite{RUSSU2012189} where a similar discussion has been provided in a completely different context.

\begin{proposition}
\label{proposition:connection_SPK_SPmean}
Consider the semiparametric models SP in \eqref{eq:SP-Mean_model_statistics} and SPK in \eqref{modelloSPK}, their respective negative marginal  log-likelihoods $L_{SP}(\mathbf y)$ and $L_{SPK}(\mathbf y)$ and the profile marginal likelihood $\hat L_{SP}(\mathbf{y}) $ defined in \eqref{eq:profile_marginal_lik}. Assume that $\rho^2,\tau,\sigma^2$ are fixed. Then, the following two statements hold:
\begin{enumerate}
\item[(i)] the minimum variance estimator  of model SP  and model SPK coincide  when $\gamma^2 \rightarrow \infty$;

\item[(ii)] in the limiting case of $\gamma^2 \rightarrow \infty$ the two log-likelihoods $L_{SPK}(\mathbf y)$ and $\hat L_{SP}(\mathbf{y})$ differ in a nontrivial term ${\rm log}({\rm det}( \Psi^\top R^{-1} \Psi))$ which leads to different location in their minima. That is:
\begin{equation}\label{Diff-SP-SPK}
\begin{array}{c}
  \displaystyle\mathop{\rm lim}_{\gamma^2 \rightarrow\infty} L_{SPK}(\mathbf y) - \hat L_{SP}( \mathbf y)  - p\, \log \,\gamma^2   \quad \quad\quad  \\
=  {\rm log}({\rm det}( \Psi^\top R^{-1} \Psi))
 \end{array}
\end{equation}
where $R=\mathbf{K}(\mathbf x,\mathbf x) + \sigma^2 I$.
\end{enumerate}
\end{proposition}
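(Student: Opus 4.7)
The plan is to treat both statements as consequences of two standard matrix identities --- the Woodbury inversion lemma and the matrix determinant lemma --- applied to the stacked data models.

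First I would write down the joint distributions. Under SP, conditional on $\pi$, $\mathbf{y}\sim\mathcal{N}(\Psi\pi,R)$, whereas under SPK, marginalising the $\mathcal{N}(0,\gamma^2 I_p)$ prior on $\pi$ yields $\mathbf{y}\sim\mathcal{N}(0,\gamma^2\Psi\Psi^\top+R)$. The cross-covariance with a test value $y^*$ at $x^*$ equals $K_*^\top$ (the GP part alone) under SP and $\gamma^2\psi(x^*)^\top\Psi^\top+K_*^\top$ under SPK. Throughout, set $A:=\Psi^\top R^{-1}\Psi$, which I will assume invertible (i.e.\ $\Psi$ has full column rank, so that $\hat\pi_{WLS}$ is well-defined).

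For part (i), the SP minimum-variance estimator is the generalised-least-squares kriging expression $\psi(x^*)^\top\hat\pi_{WLS}+K_*^\top R^{-1}(\mathbf y-\Psi\hat\pi_{WLS})$ with $\hat\pi_{WLS}=A^{-1}\Psi^\top R^{-1}\mathbf y$, while the SPK posterior mean is $(\gamma^2\psi(x^*)^\top\Psi^\top+K_*^\top)(\gamma^2\Psi\Psi^\top+R)^{-1}\mathbf y$. The push-through identity $\gamma^2\Psi^\top(\gamma^2\Psi\Psi^\top+R)^{-1}=(\gamma^{-2}I+A)^{-1}\Psi^\top R^{-1}$ handles the first summand and the Woodbury expansion $(\gamma^2\Psi\Psi^\top+R)^{-1}=R^{-1}-R^{-1}\Psi(\gamma^{-2}I+A)^{-1}\Psi^\top R^{-1}$ handles the second; sending $\gamma^2\to\infty$ replaces $(\gamma^{-2}I+A)^{-1}$ by $A^{-1}$, and the two estimators coincide.

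For part (ii), profiling $\pi$ at $\hat\pi_{WLS}$ gives $\hat L_{SP}(\mathbf y)=c+\log\det R+\mathbf y^\top(R^{-1}-R^{-1}\Psi A^{-1}\Psi^\top R^{-1})\mathbf y$. For SPK, the matrix determinant lemma yields $\det(\gamma^2\Psi\Psi^\top+R)=\det R\cdot\det(I+\gamma^2 A)$, and the factorisation $\det(I+\gamma^2 A)=\gamma^{2p}\det(A+\gamma^{-2}I)$ gives the expansion $\log\det R+p\log\gamma^2+\log\det A+o(1)$ as $\gamma^2\to\infty$. The quadratic term in $L_{SPK}$ converges to that of $\hat L_{SP}$ by the same Woodbury step used in (i). Subtracting $\hat L_{SP}(\mathbf y)$ and $p\log\gamma^2$ leaves $\log\det A=\log\det(\Psi^\top R^{-1}\Psi)$ in the limit, which is exactly \eqref{Diff-SP-SPK}. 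The only delicate point is the bookkeeping of $\gamma^2$-powers: the $p\log\gamma^2$ divergence must be peeled off cleanly so that the residual $\log\det A$ survives. Everything else is routine matrix algebra once $A\succ 0$ is assumed, and the same Woodbury computation feeds both parts of the proof.
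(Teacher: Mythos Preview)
Your proposal is correct and follows essentially the same route as the paper: both use the Woodbury identity to handle the inverse $(\gamma^2\Psi\Psi^\top+R)^{-1}$ and the Sylvester/matrix determinant lemma to split $\log\det(\gamma^2\Psi\Psi^\top+R)$, then let $\gamma^2\to\infty$ so that $(\gamma^{-2}I+A)^{-1}\to A^{-1}$ and the $p\log\gamma^2$ term is peeled off. Your explicit assumption that $\Psi$ has full column rank (so $A\succ 0$ and $\hat\pi_{WLS}$ exists) is a useful clarification that the paper leaves implicit.
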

 \begin{proof} See Appendix \ref{app:proof}.
 \end{proof}
   \subsection{Discussion of Proposition \ref{proposition:connection_SPK_SPmean}}
\label{app:discussion}
 Proposition \ref{proposition:connection_SPK_SPmean} shows that, when estimating hyperparameters using the marginal likelihoods (as discussed in \ref{subsec:hyperparameter_estimation}), the two models (SP and SPK) are not equivalent even under the assumption that  a non informative prior on $\pi$ is used, i.e. $\gamma^2 \rightarrow \infty$. In fact, the two marginal likelihood functions differ by a nontrivial term ${\rm log}({\rm det}( \Psi^\top R^{-1} \Psi)) $ which may change the location of the minima. In particular the latter term accounts for the uncertainty in estimating the term $\pi$. 
 
When the SP model is used, the hyperparameters $\sigma^2, \rho^2, \tau$ are estimated minimising 
$$
\hat L_{SP}(\mathbf{y}) = {\rm log}({\rm det}(2\pi R)) + (\mathbf{y}-\Psi \hat \pi_{WLS} )^\top R^{-1} (\mathbf{y}-\Psi \hat \pi_{WLS} ).	
$$ Doing so, $\sigma^2, \rho^2, \tau$ are chosen so as to fit  with $R$ the ``sample'' covariance 
$(\mathbf{y}-\Psi \hat \pi_{WLS} )(\mathbf{y}-\Psi \hat \pi_{WLS} )^\top$. The drawback of this choice is that the component of the variance along
$\Psi$ may be underestimated (i.e. $R$ too small along $\Psi$). On the contrary, when optimising $L_{SPK}({\mathbf y})$ 
the term
 $$
 {\rm log}({\rm det}( \Psi^\top R^{-1} \Psi)) = - {\rm log}({\rm det}( Var\{\hat\pi_{WLS}\})
$$ favour values of $\sigma^2, \rho^2, \tau$ which make ${\rm log}({\rm det}( Var\{\hat\pi_{WLS}\})$ large or, equivalently, $R^{-1}$ small (i.e. $R$ large) along the direction of $\Psi$.

Clearly this goes in the opposite direction and avoids the risk alluded at above (i.e. $R$ too small along $\Psi$).   
Thus, to summarise, the term ${\rm log}({\rm det}( \Psi^\top R^{-1} \Psi))$ serves as a correction which accounts for the uncertainty of $\hat\pi_{WLS}$. Thus it is fair to say that model SPK deals more ``cautiously'' with uncertainty than model SP. This is reflected in the simulation results (see e.g. Figure \ref{fig:prediction_error_boxplot}) where SPK slightly outperforms SP (both endowed with ML hyperparameters estimation, i.e. SP-ML and SPK-ML).

%---------------------------------------------------------------------%
\section{ Kernel Approximation and Online Learning}
\label{sec:model_approx}
%---------------------------------------------------------------------%

In this Section we review the online learning problem using the model classes of Section \ref{sec:semiparametric}. Since it is well known how to perform online learning using the parametric model (\ref{par_regr_model}), we will focus on the other models. On the other hand, model (\ref{par_regr_model}) can be understood as a special case, for instance, of model (\ref{eq:SP-Mean_model_statistics}). Online learning using nonparametric and semiparametric models can be performed  by applying the Gaussian regression framework, \cite{Rasmussen}. In order to do so, two issues have to be faced  which will be addressed in Section \ref{sec:model_approx/kernel_approx} and Section \ref{subsec:hyperparameter_estimation}, respectively. First, following \cite{SEMIPARAMTERIC_2016,romeres2016onlineIcub}, an approximation of the kernel is introduced to allow online learning with constant complexity. Second, two commonly used approaches are very briefly reviewed  to estimate the hyperparametes vector $\eta$, namely Cross Validation and Marginal Likelihood optimisation. 

%----------------------------------------------------------%
\subsection{ Kernel approximation}
\label{sec:model_approx/kernel_approx}
%----------------------------------------------------------%

A typical route followed in   machine learning is to approximate the  kernel with a low rank factorisation of the form

\begin{equation}
\label{approx_K}
K_G(x(t),x(s)) \approx \phi^\top(x(t))  \phi(x(s))
\end{equation}
where $\phi\,: \, \mathbb{R}^m \rightarrow \mathbb{R}^{2d}$ is a suitable vector of ``basis functions''  that have to be properly defined. Ideally,  $\phi(x(t)) $ should be built from the eigenfunctions of the kernel matrix; however this optimal approximation is computationally  unfeasible in a recursive algorithm\footnote{An exception is the case when the eigenfunctions of the kernel are available analytically.}. Therefore, we rely on the random features approximation proposed in~\cite{rahimi2007random} and exploited for online inverse dynamics modelling in \cite{gijsberts2011incremental,SEMIPARAMTERIC_2016,romeres2016onlineIcub}.

According to the random feature approximation, the basis functions vector for the Gaussian kernel is defined as\footnote{This construction is based on the fact that any bounded and positive semidefinite kernel is the Fourier transform of a non-negative and integrable function (Bochner's theorem), which thus induces a probability distribution (the function $p(\omega)$ below equation \eqref{eq:basis_function}). See \cite{rahimi2007random} for details. Another interpretation is that $p(\omega)$ represents (up to a constant factor) the power spectral density \cite{papoulis2002probability} of the stationary kernel $K_G$.}

\al{
\label{eq:basis_function}
\phi(x)=  \frac{1}{\sqrt{d}}
& \begin{array}{ccc} 
\left[    \cos  \left(\frac{\omega_1^\top x}{\tau} \right) \right. & \ldots &  \cos  \left(\frac{\omega_d^\top x}{\tau} \right)   \\
    \end{array} \nn \\
& \hspace{0.5cm}\begin{array}{ccc} 
 \sin \left(\frac{\omega_1^\top x}{\tau} \right)  & \ldots  & \left. \sin  \left(\frac{\omega_d^\top x}{\tau} \right)   \right]^\top
\end{array}}
with $w_k \sim  p(\omega) = \exp(-\|\omega\|^2 /2) /(\sqrt{2\uppi} )^m$, $k=1,\ldots,d$. 
 Notice that if $d\rightarrow \infty$ then the approximation is almost surely exact.
As a consequence, the parameter $d$ has to be chosen to trade-off accuracy of the approximation and  computational complexity. Finally, it is worth noting that $\phi(x)$ depends on the width of the Gaussian kernel $\tau$.

Using the approximation in (\ref{approx_K}) for model (\ref{modelloSPK}) is equivalent to consider the approximation
\al{f(x(t))\approx \psi^\top (x(t))\pi+\left[\phi^\top (x(t))\otimes I_n\right] \pi_{NP}} where $\pi$ and $\pi_{NP}$ are independent Gaussian random vectors with zero mean and covariance matrices $\gamma^2 I_p$ and $\rho^2 I_{2d}$, respectively. Finally, by defining \al{\varphi^\top(x(t))&=\left[\begin{array}{cc}\psi^\top (x(t)) & \phi^\top (x(t))\otimes I_n  \end{array}\right]\nn\\ 
\theta &=\left[\begin{array}{cc}\pi^\top &\pi_{NP}^\top \end{array}\right]^\top
\nn} the approximation of model (\ref{modelloSPK}) is
\al{\label{modello_approx}y(t)=\varphi^\top(x(t))\theta+e(t).}  It is possible to show that all the models in Section \ref{sec:semiparametric} can be approximated with a model in the form (\ref{modello_approx}). A derivation of all these approximations can be found in 
 \cite{romeres2016onlineIcub}.

%----------------------------------------------------------%
\subsection{Online learning}
\label{sec:model_approx/online_learning}
%----------------------------------------------------------%
 
 Next, we address the problem of  online learning. For simplicity of exposition, we consider the NP model in (\ref{nonpar_model}). It is well known that, given  data $(y(t),x(t))$, $t=1\ldots N$, the minimum variance estimator of $f$ can also be expressed as  the solution of the Tikhonov regularization problem,  
\cite{Rasmussen}, 
\begin{equation}
\label{reg_pb_inf}
\hat f =\underset{f\in\mathcal{H} }{\mathrm{argmin}} \frac{1}{\sigma^2}\sum_{t=1}^N \|y(t)-f(x(t)) \|^2+\frac{1}{\rho^2}\| f\|^2_{\mathcal{H}}
\end{equation}

where $f$ belongs to the reproducing kernel Hilbert space $\mathcal{H}$, \cite{wahba1990spline}, with reproducing kernel function $K_G$ and norm $\|\cdot\|_{\mathcal{H}}$. % = f^\top K_G^{-1} f$. 

It is not difficult to see that the Tikhonov regularization problem (\ref{reg_pb_inf}) takes a similar form for all the models of Section \ref{sec:semiparametric}.  The solution of (\ref{reg_pb_inf}) takes the general form
\al{\hat f(x)=\sum_{t=1}^N \alpha_t K(x,x(t))}
where $\alpha_t$'s are uniquely determined from $(x(1),y(1)), \ldots ,(x(N),y(N))$. Accordingly, the solution of (\ref{reg_pb_inf}) is unique regardless of the excitation properties of $x(t)$. Moreover, it is possible to prove that $\alpha_t$'s take finite value, so that $\hat f$ maps bounded trajectories into bounded torques/forces. It is worth noting that the number of coefficients $\alpha_t$ coincides with the number of the data points/samples, i.e. $N$, making the estimator intractable for an online (recursive) solution. On the other hand, using the approximated model (\ref{modello_approx}), the minimum variance estimator of $\theta$ is the solution to the following Regularized Least Squares problem 

\al{\label{ReLS} \hat \theta=\underset{\theta}{\mathrm{argmin}} \frac{1}{\sigma^2}\sum_{t=1}^N \|y(t)-\varphi^\top(x(t))\theta\|^2+\theta^\top W\theta}

where  $W$ is the inverse kernel matrix induced by the approximation. Its optimal solution can be computed recursively, whenever new data becomes available, through the well known Recursive Least Squares algorithm, see e.g   \cite[Chapter 11]{Ljung:99} and \cite{RPEM}. In practice, the implementation of this algorithm uses Cholesky-based updates \cite{bjorck96},  which have robust numerical properties. The computational complexity of each update is $O(\bar p^2)$ where $\bar p $ is the dimension of vector $\theta$;  the cost of evaluating the model output is 
$O(\bar p)$ for each output and thus $O(\bar p n)$  if $n$ wrenches (torques/forces) are to be computed. This computational complexity is compatible with online implementation for state-of-the-art computation facilities.

\subsection{Hyperparameter vector estimation}
\label{subsec:hyperparameter_estimation}
Nonparametric and semiparametric models are characterized by the hyperparameters vector $\eta$, which is not known  and needs to be estimated from the data. Typically, two approaches are considered to address this problem.

A first possible method is called \textit{Cross Validation} (CV). The goal is to obtain an estimate of the prediction capability of the model on future data for different choices of the hyperparameters vector $\eta$. Hold-out cross validation approach is a possible version of CV, see e.g. \cite[Chapter 6]{james2013introduction}. The dataset is split in two, the training set and the validation set. The optimal $\eta$ is  given by optimizing an estimate of the mean squared error in the validation set
\al{\label{CVcost}\hat \eta_{CV}=\underset{\eta\in\Omega}{\mathrm{argmin}}\; \widehat{\mathrm{MSE}}(\eta).}
In practice this approach is limited to the estimation of a small number of hyperparameters since the minimization in \eqref{CVcost} is typically performed by gridding the search space $\Omega$.

A second method is offered directly by the Gaussian regression framework. The {\em marginal likelihood} (ML), denoted by $p_\eta(\mathbf{y})$, expresses the likelihood of the hyperparameters vector $\eta$ on the data $\mathbf{y}:=[\, y(1)^\top\, \ldots \, y(N)^\top]^\top$, once the parameter $\theta$ has been integrated out. Under model (\ref{modello_approx}), the latter can be computed in closed form, as
discussed in \cite{Rasmussen}. To conclude, the optimal $\eta$ is then given by solving 

\al{\hat \eta_{ML}=\underset{\eta\in\Omega}{\mathrm{argmax}}\;p_\eta(\mathbf{y}).} 

In Section \ref{sec: Simulations} we shall thoroughly compare on real data these two approaches for hyperparameter estimation. The experimental results, in this case, show that Marginal Likelihood optimisation outperforms Cross Validation. 
%----------------------------------------------------------%
%===============================================================================

\section{Derivative-free Learning} \label{sec:derivative-free}

In Section \ref{subsec:par} we have seen that the rigid body dynamics suggests that the inverse dynamics is described by a map from the input locations vector $x(t)=[\,q^\top(t) \,\dot{q}^\top(t) \, \ddot{q}^\top(t) \, ]^\top$ 
composed by joint positions-velocities-accelerations to the joint torques applied to the $n$ joints. This hypothesis has been exploited in all the model classes in Section \ref{sec:semiparametric}. However, it is often the case that joint velocities and accelerations cannot be measured from the robot; rather they  are estimated using numerical differentiation schemes from the (noisy) measurements of the joint positions. As a consequence, failure to properly handle noise in the measurement  may severely hamper the final solution.   This is a very well known and highly discussed problem, see e.g., \cite{siciliano2010robotics,hollerbach2008model,kozlowski2012modelling,craig2005introduction,Nguyen-Tuong2011} and it is usually  partially addressed by ad-hoc filter design. However, this requires users' knowledge and experience in tuning the filters' parameters.

In this Section we propose a new methodology, which avoids the use of numerical pre-differentiation issues, by learning the inverse dynamics without using the estimated velocities and accelerations.  We shall thus assume that only  joint torques $y(t)$ and  joint positions $q(t)$, $t = 1\ldots N$, are measured. Let 
\al{q(t^-) := [\,q_1^\top(t^-) \, q_2^\top(t^-) \, \ldots \, q_n^\top(t^-) \,]^\top \in \R^{(M+1) \ndof}, }
and 
\al{q_i(t^-) := [\,q_i^\top(t) \, q_i^\top(t-1) \, \ldots \, q_i^\top(t-M) \,]^\top \in \R^{M+1 }, }
be the vector of the past joint positions and the vector of the past of the $i$-th joint position, respectively, 
in the time window $[t-M,t]$ where $M$, sufficiently large, has been fixed. Our aim is to model the inverse dynamics as a map from    the past joint positions $q(t^-)$ to the joint torques. In particular, we shall postulate that output $y(t)$ can be written as a non-linear function of a ``features vector'' ${\xi}(t):=[\,\xi_1^\top(t)  \, \xi_2^\top(t)  \, \ldots \,\xi_n^\top(t) \,]^\top$, defined as a linear function 
of past measurements $q_i(t^-)$ 
\begin{equation}
\label{eq:input_locations_derivativefree_general}
{\xi}_i(t) = R \, q_i(t^-)
\end{equation}
where $R \in \R^{k \times (M+1)}$. Using the features \eqref{eq:input_locations_derivativefree_general} can be seen as a reduced rank regression problem. We shall discuss later on the choice of the number $k$  (rows of $R$), i.e. the number of features which are used by model \eqref{eq:derivative_free_model} below\footnote{For example, in the standard case, where we have the input locations vector, the features are   $k=3$: joint positions, velocities and accelerations.}.  The choice of $k$ could also be performed resorting to Bayesian arguments, as for instance done in \cite{BSL_JOURNAL} in a similar low-rank regression problem in the context of  sparse-low rank dynamic network models. 

In particular, we shall  model the inverse dynamics with the NP model 
\begin{align}
\label{eq:derivative_free_model}
y(t) = f(\xi(t)) + e(t)
\end{align}
where  $f(\cdot)$ is a Gaussian process with kernel function 
\al{ K(\xi(s),\xi(t))=\rho^2K_G(\xi(s),\xi(t)) I_n .} 

Then, the online learning algorithm is similar to the one sketched in Section \ref{sec:model_approx}. The only difference is that the ``standard'' input locations vector $x(t)$  in \eqref{eq:x} is replaced with $\xi(t)$ in   \eqref{eq:input_locations_derivativefree_general}. Special cases which one may consider are $R=I$ so that  $\xi(t)$ in \eqref{eq:input_locations_derivativefree_general} coincides with the past measurements (i.e. no dimensionally reduction is performed); as an alternative (we shall come back to this later on) the rows of the matrix $R$ may compute, using causal filtering operations, an approximation of the derivatives  of $q(t)$ (i.e. $\dot{q}(t), \ddot{q}(t)$ and possibly higher order derivatives). We may thus say that the approach considered in this section generalises what seen in the previous ones.

In the following, the matrix $R$  in (\ref{eq:input_locations_derivativefree_general}) will be described  by a set of  hyperparameters that have to be estimated from the data; for instance, for the NP model we shall have $\eta:=[\,\rho^2\, \tau\, \sigma^2\, \delta^\top \,]$, where $\delta$ is the  vector containing the hyperparameters of $R$. Models P, SP, SP2 and SPK can be derived in a similar way.

We shall now consider three possible choices (but of course many others would be possible) for the structure of the matrix $R$.

%------------------------------------------------------------------------------------%
\subsection{Derivative-free features}
The simplest choice is to take $R=I_{M+1}$ that is the features vector coincides with the $M$ past measured joint positions. Thus $k=M+1$. These features will be denoted as ``Derivative-Free" (DF).
As an alternative it is possible to choose 
\al{	\label{eqRdiag}R=\left[\begin{array}{cccc} r_1 & 0&  \ldots & 0\\ 0 & \ddots & & \vdots  \\  \vdots &  & \ddots & 0\\
0 & \ldots & 0 & r_{M+1}\end{array}\right]}
with $r_i \in \R, i = 1,.., M+1$. That is, the features vector is a weighted version of the past measured joint positions. These features will be denoted as ``Derivative-Free Weighted'' (DFW).

%------------------------------------------------------------------------------------%
\subsection{{Derivative-free features with reduced rank }}
\label{subsec:derivative_free/reduced_rank}

An alternative choice is to take the number of features $k$ smaller than $M+1$. For instance, the physics suggests that the right number of features should be equal to 3 (position, velocity, acceleration). Therefore, the role of the features is to compress the useful information available in $q(t^-)$ so as to render the learning procedure more robust. This means that
\begin{equation}
\label{eq:input_locations_features_free}
 R = \left[\begin{array}{ccc} r_{1}^\top\\  r_{2}^\top \\ \vdots \\  r_{k}^\top \end{array}\right],  \quad \|r_i\| = 1
\end{equation}

where $k\ll M$, $r_i \in \R^{M+1}$ and the vector of hyperparameters is $\delta:=[r_1,..,r_k]$, i.e. $R$ is fully parameterised. In practice it makes also sense to impose some constraints in the $r_i$'s. In particular $r_1^\top := [1\, 0 \ldots \, 0]$ so that the first feature is the measured position. It would also be reasonable to enforce other constraints, for instance imposing that $R$ is an orthogonal projection. Notwithstanding its importance, we shall not delve further into this issue. 

The number of features $k$ is chosen by the user and it will be subject of empirical analysis in Section \ref{sec: Simulations}. These features will be denoted as ``Derivative-Free Reduced rank'' (DFR).

Notice, that the features DFR include all the possible linear and causal numerical differentiation and filtering operations. The price of this generality is that a large number of hyperparameters has to be estimated, i.e. $(k-1)(M+1)$. As it is well known in optimization, this might lead to local minima problems. In order to overcome this issue one might resort to regularization techniques on the hyperparameters or to set appropriate initial conditions.

%------------------------------------------------------------------------------------%
\subsection{Structured derivative-free input locations with reduced rank}
The last idea is to consider only $k=3$ features: the first will be the position $q(t)$ while the other two will attempt to estimate explicitly   velocities and accelerations. We know that the joint velocities and accelerations can be computed by a first order backward difference and by a second order backward difference, respectively, both  filtered by a first order low pass filter, that is
\begin{align*}
\dot{q}_i(t) &\approx  \frac{1-z^{-1}}{T_s} \frac{1}{1-\beta_1z^{-1}} q_i(t)\\
\ddot{q}_i(t) &\approx \frac{1-2 z^{-1}+z^{-2}}{T_s^2} \frac{1}{1-\beta_2z^{-1}} q_i(t)
\end{align*}
where $z^{-1}$ is the backward shift operator, $T_s > 0$ is the sampling time and $ 0 < \beta_1, \, \beta_2 < 1$ represent the poles of the filters. 

We resort to a partial fraction decomposition to rewrite the above expressions as a function of $q(s^{-})$, that is:

\begin{align}
\dot{q}_i(t) & \approx \alpha_1 q_i(t) + \sum_{t=1}^M \alpha_1 \beta_1^{t-1}( \beta_1 -1) q_i(s-t) \label{eq:velocity_fraction_decomposition_icub}\\
\ddot{q}_i(t) & \approx \alpha_2 q_i(t) + \alpha_2(\beta_2 -2)q_i(s-1) \notag \\
& \quad + \sum_{t=2}^M \alpha_2 \beta_2^{t-2}(\beta_2^2 -2\beta_2 + 1) q_i(s-t)
\end{align}
where $\alpha_1 = 1/ T_s$ and $\alpha_2 = 1/ T_s^2$. Here, we exploited the fact that a (stable) low-pass filter can be approximated by finite impulse response (FIR) filter with length $M$, where the latter is chosen sufficiently large. Accordingly, we have 
\begin{align}
 R = 
\begin{bmatrix}
1 & 0  & \ldots & \ldots & 0  \\
\alpha_1  & \alpha_1(\beta_1 -1)  & \ldots & \alpha_1 \beta_1^{t-1}(\beta_1 -1)  & \ldots \\
\alpha_2   & \alpha_2(\beta_2 -2)  & \ldots & \alpha_2\beta_2^{t-2}(\beta_2^2 -2\beta_2 + 1) & \ldots
\end{bmatrix}.
 \notag \\
 \label{eq:input_locations_features_vel_acc}
\end{align}

These features are denoted as ``Derivative-Free Structured Reduced rank'' (DFSR) since the features have a structure to resemble the joint positions, velocities and accelerations.

A nice property of this characterization is that the number of hyperparameters in $R$ is small, in this case $\delta = [\alpha_1,\alpha_2,\beta_1,\beta_2] \in \mathbb{R}^4$,   independently of the length of the past temporal lags $M$, which can be arbitrarily chosen.

%===============================================================================

\section{Inverse Dynamics Learning on iCub} \label{sec: Simulations}

In this section, we shall provide a thorough experimental evaluation of all the models discussed in Section \ref{sec:semiparametric}, also endowed with derivative-free input locations as discussed in Section \ref{sec:derivative-free}, for online learning of the inverse dynamics; all the algorithms have been tested using real data from the  iCub robot  \cite{metta2010icub}, which is  a full-body humanoid robot with 53 degrees of freedom. 

In this work we consider inverse dynamics modelling of the right arm. Therefore the free coordinates    $q(s) \in \mathbb{R}^4$ are the angular positions of the 3 shoulder joints and of the elbow joint, for a total of 4 degrees of freedom. When needed,  joint positions have been numerically differentiated to obtain joint velocities and accelerations by the Authors of \cite{SEMIPARAMTERIC_2016} using the standard  adaptWinPolyEstimator\footnote{Available at \\$wiki.icub.org/brain/adaptWinPolyEstimator\_8cpp\_source.html$} module of the open source iCub project. The differentiation procedure consists in applying a time-varying linear filter based on the work  \cite{janabi2000discrete}, that is actually implemented at a higher rate before downsampling the signals. 

%In the following the experiments are conducted with the two kind of input locations preivously described. First, the input locations vector $x(s)$ is the stack of the joint positions, velocities and accelerations which will be referred as ``standard'' input locations; accordingly, the models identified using these input locations will be referred as ``standard'' models. Second, the input locations vectors $\tilde{x}(s)$ based on the learning techniques described in  Section~\ref{sec:derivative-free}, which will be referred as ``derivative-free'' input locations and so are named the models identified with them.
 
The outputs  $y(s) \in \mathbb{R}^6$, are the 3 forces and 3 torques components measured by the six-axes force/torque (F/T) sensors embedded in the shoulder of the iCub's arm, see Figure \ref{fig_right_arm}.

\begin{figure}[hbtp]
\centering
\includegraphics[width=\columnwidth]{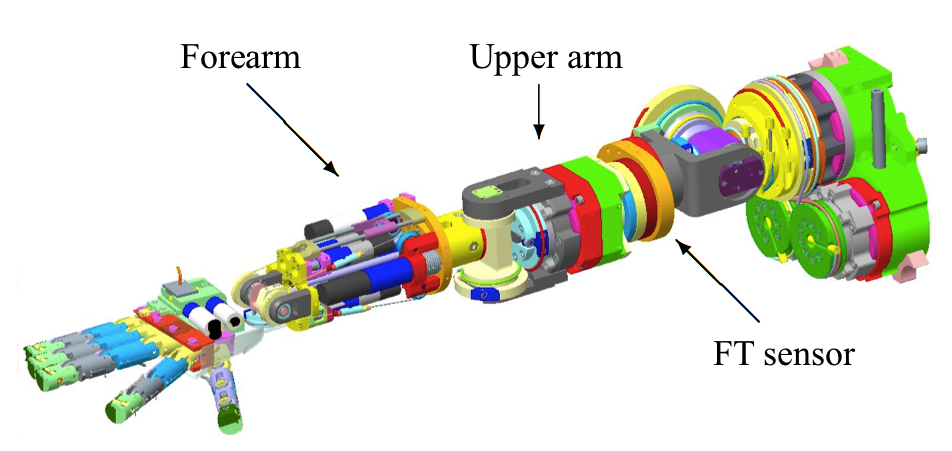}
\caption{iCub's right arm.}\label{fig_right_arm}
\end{figure}

%\reflectbox{\includegraphics[width=\columnwidth]{right_arm}}

Notice that the measured forces/torques are not the applied joint forces and torques and, as such, the model we learn is not, strictly speaking, the inverse dynamics model. Yet, as explained in \cite{ivaldi2011computing}, the feedforward joint torques can be determined from    components  (forces and  torques) of $y(s)$. Indeed, such model has been used in the literature as a benchmark for the inverse dynamics learning, \cite{gijsberts2011incremental}, \cite{SEMIPARAMTERIC_2016}.

We consider the two datasets used in \cite{SEMIPARAMTERIC_2016}, corresponding to different trajectories of the end-effector. In the first one (\emph{XY-dataset}) the end-effector tracks circles in the XY  plane  of radius $10 cm$  at an approximate speed of $6m/s$; in the second one (\emph{XZ-dataset}), the end-effector tracks similar circles but in the XZ plane (the Z axis corresponds to the vertical direction, parallel to the gravity force). The two circles are tracked using the Cartesian controller proposed in \cite{pattacini2010experimental}. Each dataset contains approximately 8 minutes of data collected  at a sampling rate of $20Hz$, for a total of 10000 points per dataset. One single circle is completed by the robot in about $1.25$ seconds, which corresponds to 25 points.

We shall consider  the models described in Section \ref{sec:semiparametric} endowed with standard input locations or derivative-free features and hyperparameters estimated via either the marginal likelihood approach or Cross Validation\footnote{As discussed in Section \ref{subsec:hyperparameter_estimation}, using the Cross Validation methods is unfeasible when the number of hyperparameters is large; therefore we have not applied validation to the semiparametric model with RBD mean when the mean is to be considered as an hyperparameter nor to the semiparametric model with RBD kernel which has the extra parameter $\gamma^2$.}, the latter has been discussed in \cite{SEMIPARAMTERIC_2016}. For ease of exposition the models will be denoted with a combination of the following shorthands: 
\begin{itemize}
\item
\emph{P}, \emph{NP}, \emph{SP}, \emph{SP2}, \emph{SPK}:  to indicate the model class, as discussed in Section \ref{sec:semiparametric}, approximated according to Section \ref{sec:model_approx/kernel_approx}.
\item \emph{ML}, \emph{CV}: to indicate the method used to estimate the hyperparameters, according to Section \ref{subsec:hyperparameter_estimation}.
\item \emph{DF}, \emph{DFW}, \emph{DFR}, \emph{DFSR}: to indicate the different derivative-free features, according to Section \ref{sec:derivative-free}. If nothing is indicated, the standard input locations are considered.
\end{itemize}

For example, we shall denote ``NP-ML'', the nonparametric model \eqref{nonpar_model} with hyperparameters estimated through maximization of the marginal likelihood; instead 
``NP-ML-DFR'' shall denote the nonparametric model \eqref{eq:derivative_free_model} with hyperparameters estimated through maximization of the marginal likelihood and derivative-free features defined in \eqref{eq:input_locations_features_free}.

The estimation routine has been implemented using Matlab. The RBD regressor $\psi$ for the iCub's right arm has been computed using the  library iDynTree, \cite{nori2015icub}. The Marginal Likelihood has been optimized using the Matlab \verb!fmincon.m! function.
The recursive least squares algorithms have been implemented using the GURLS library, \cite{tacchetti2013gurls}. 
The results of all CV methods are obtained using  code which has been  kindly provided by the authors of \cite{SEMIPARAMTERIC_2016}.

%For each algorithm as above,  we consider the following online learning scenario (with reference to the general model structure \eqref{model}):
%\begin{itemize}
%\item Initialization: The first 1000 points in XY-dataset are used to estimate the hyperparameters, as well as to compute an    
% initial estimate of parameter $\theta$, say $\hat{\theta}_0$.
% \item Training XY:  Use the remaining 9000 points of XY-dataset to update online parameter $\theta$ using the recursive least-squares algorithm, thus obtaining   $\hat \theta_t$, $t=1,\dots,9000$.
%\item Training XZ: The  XZ-dataset is split in 5 sequential subsets of 2000 points (approximately $100$ seconds) each. For each subset we update online the parameter $\theta$ independently always initializing the recursions with  $\hat \theta_{9000}$, computed from the  training dataset XY .
%\end{itemize}
For each model as above, the following online learning scenario is considered (with reference to the general model structure \eqref{modello_approx}):
\begin{itemize}
\item \emph{Initialization}: The first 1000 points in XY-dataset are used to estimate the hyperparameters with one of the two techniques considered, say $\hat{\eta}_{ML}$ and $\hat{\eta}_{CV}$, as well as to compute an initial estimate of the parameter $\theta$, say $\hat{\theta}_{1000}$.
 \item \emph{Online Estimation - Stage 1}:  The remaining 9000 points of the XY-dataset are used to update online the parameter $\theta$ using the recursive least squares algorithm, thus obtaining   $\hat \theta_t$, $t=1001,\dots,10000$. Let  $\hat \theta_F^1 = \hat\theta_{10000}$ be the final value obtained by this procedure on the XY-dataset.

\item \emph{Online Estimation - Stage 2}: The  XZ-dataset is split in 5 sequential subsets (\emph{XZ-dataset-i}, $i=1,..,5$) of 2000 points each (approximately $100$ seconds). These subsets are used to evaluate the performance of the online estimators. For each subset,  the estimator of $\theta$ is always  initialized with $\hat \theta_F^1$ and updated recursively on the $2000$ data of each dataset \emph{XZ-dataset-i}, $i=1,..,5$.  \end{itemize}
Note that the estimators in Stage 2 are initialised using the final estimate from Stage 1, which corresponds to a different motion (XY-dataset).
 Therefore, the evaluation of the performance in Stage 2 allows us to verify how well the estimators  generalise on new unseen data (initial part of the new dataset) as well as how well they are able to learn adapting to a new experimental condition (transient and steady state).

In order to measure the quality of the estimated models, we evaluate the online prediction capability of the estimated models using the following index: 
\begin{align}
\label{eq:pred_error}
\varepsilon^{(k)}_t &= {\frac{\sum_{s=1}^{T}(y^{(k)}({t+s}) -\hat{y}^{(k)}({t+s|t}) )^2}{\sum_{s=1}^{T} (y^{(k)}({t+s}))^2}}
\end{align}
where $\hat{y}^{(k)}(t+s|t)$ is the estimate of  the $k$-th output  $y^{(k)}({t+s})$ at time $t+s$ using the model $\hat{\cal M}_t$ estimated with data up to time $t$. Note that the test data $y^{(k)}({t+s})$, $s=1\ldots T$, have not been used to estimate the model $\hat{\cal M}_t$,
Thus, $\varepsilon^{(k)}(t)$ is an average relative error on a future horizon of length $T$ for the output component $k$. In addition, $\varepsilon^F(t)$ and $\varepsilon^T(t)$ will be the average values of $\varepsilon^{(k)}(t)$ for the 3 forces ($k=1,2,3$) and the 3 torques ($k=4,5,6$), respectively. In all our simulations, $T$ will be equal to $25$ (which roughly corresponds to one revolution of the end effector in the $XZ$ plane).

The index \eqref{eq:pred_error} can be motivated, for instance, by the  possible use of the model in the framework of  model predictive control \cite{maciejowski2002predictive}, where the final performance of the controller hinges on the predictive capabilities of the model  on a given future horizon (say $T$). 
 
The experimental results will be first presented for models that use numerical derivatives and then for the derivative-free models; a comparison will be eventually provided between the two. For each experiment, we show $\varepsilon^F(t)$ and $\varepsilon^T(t)$ averaged over the $5$ subsets XZ-Dataset-i, $i =1,..,5$.
\subsection{Experimental results using numerical derivatives}
\label{sec:simulations_stand_inp_loc}

In Figure \ref{fig:prediction_error}
\begin{figure}[hbtp]
\centering
%scale=0.5
\includegraphics[width=1\columnwidth]{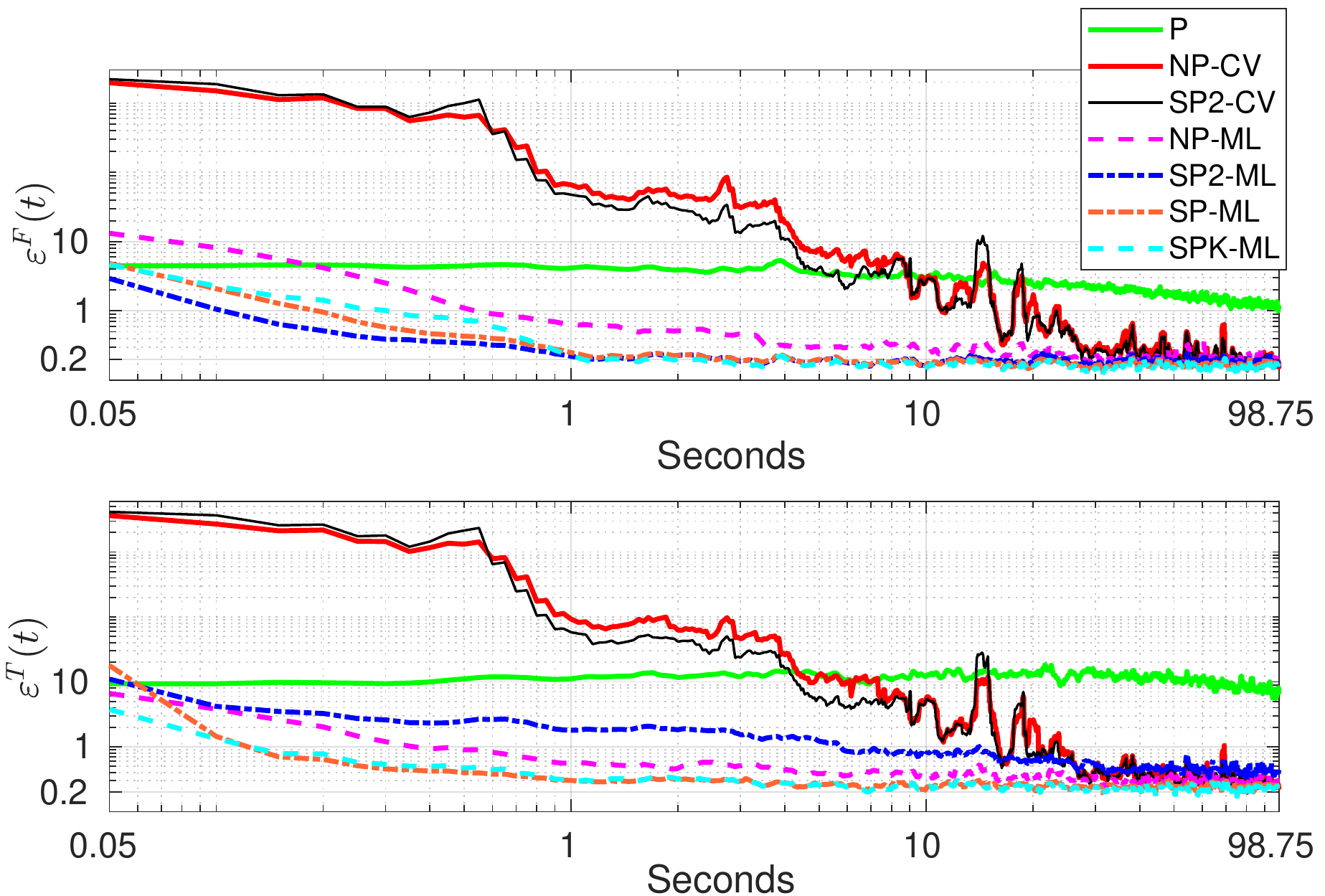}
\caption{Average (over the 5 subsets of 100 seconds each) of the relative squared prediction errors $\varepsilon^F(t)$ and $\varepsilon^T(t)$, computed with $T=25$ corresponding to an  horizon of 1.25 seconds. A  log-log scale is used for ease of readability.}
\label{fig:prediction_error}
\end{figure}
the behavior of $\varepsilon^F(t)$ and $\varepsilon^T(t)$  is presented. The parametric model, P, exhibits a poor performance because it describes only crude idealizations of the actual dynamics. The algorithms based on Cross Validation (CV)  perform significantly worse in the first $60$ seconds than those based on Marginal Likelihood (ML) optimisation; this is not unexpected as discussed in \cite{Pillonetto2015106}.

As expected, the nonparametric model, NP-ML, has worse generalization performance (the error is larger in the first few steps) but better adaptation  capabilities with respect to model P. The models with the best performance are SP-ML and SPK-ML because they combine the benefit of the parametric approach, i.e. generalization capabilities (good estimation performance at the beginning of the new dataset) and of the nonparametric approach, i.e. learning capabilities (good transient and steady state performance). The estimator SP2-ML should partially inherit these benefits from the SP structure, yet it shows an overall  slightly worse performance. This is due to the fact that  the first (least squares) step, i.e. the estimation of the linear model, is subject to a strong bias deriving from the unmodeled dynamics. Instead, a sound approach is followed by SP-ML and SPK-ML  in which the estimation of the hyperparameters is performed  jointly, avoiding such bias. In the steady state these semiparametric models outperform the others; yet the semiparametric SPK-ML performs  best both in terms of average as well as distribution, as shown in  Figure \ref{fig:prediction_error_boxplot}
\begin{figure}[hbtp]
\centering
%scale=0.5
\includegraphics[width=\columnwidth]{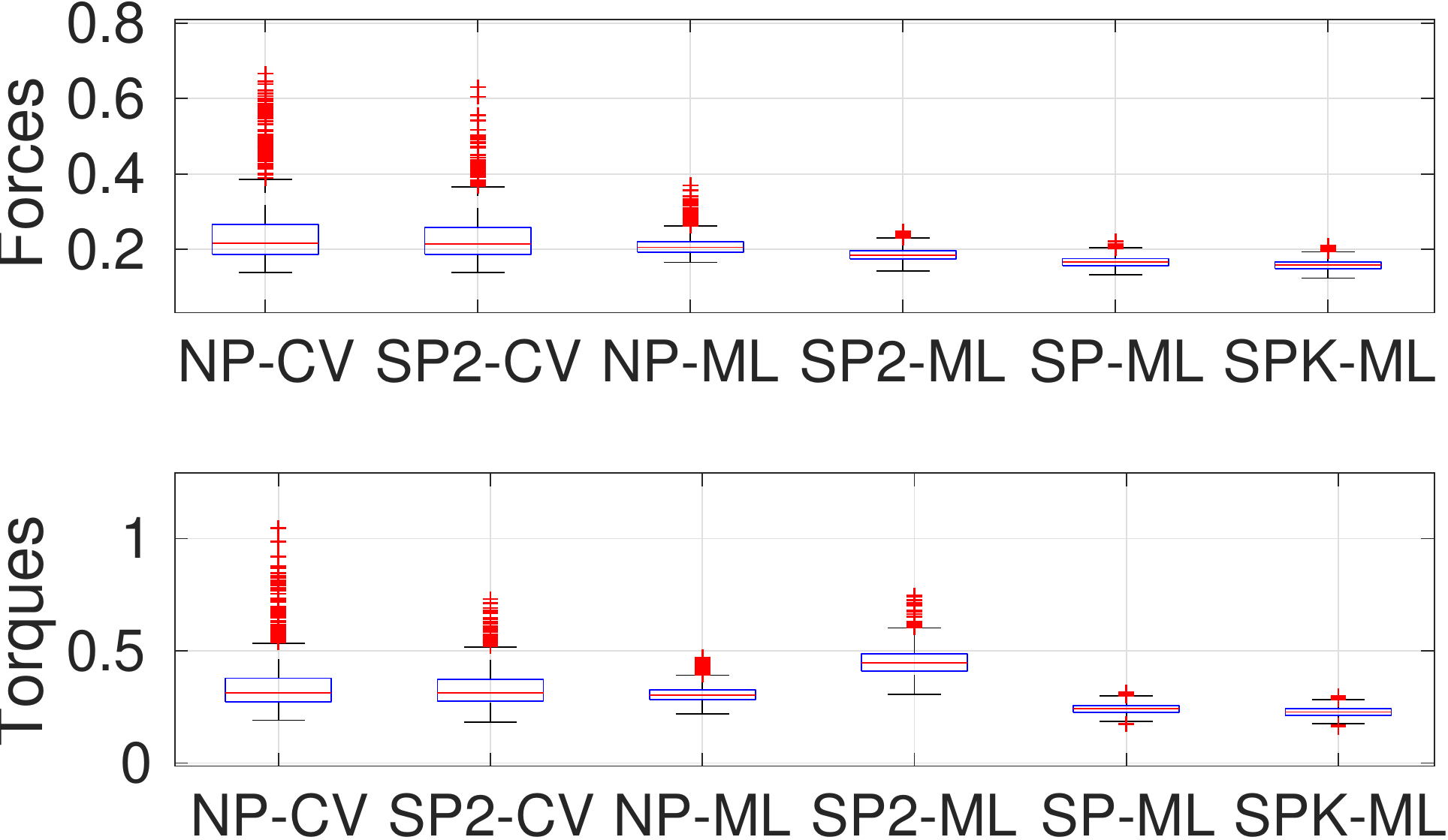}
\caption{Boxplots of the steady state (i.e.  after $30$ seconds, see also Figure \ref{fig:prediction_error}) of the relative squared prediction errors 
$\varepsilon^F(t)$ and $\varepsilon^T(t)$, computed with $T=25$ corresponding to an  horizon of 1.25 seconds.}
\label{fig:prediction_error_boxplot}
\end{figure}
which reports the boxplots of $\varepsilon_t^F$ and $\varepsilon_t^T$ in ``steady state'', i.e. after the first $30$ seconds which is considered to be transient, see Figure  \ref{fig:prediction_error}. The reader is also referred to  the discussion in Appendix \ref{app:discussion} for a theoretical justification of this latter fact.

%------------------------------------------------------------------------%
\subsection{Experimental results with derivative-free features}

The focus of this section is the analysis of the derivative-free models, including the comparison with the  models that use numerical derivatives  presented in Section \ref{sec:simulations_stand_inp_loc}. 

%\textcolor{blue}{The prediction capabilities are again measured by the average values for the 3 forces, $\varepsilon_t^F$, and for the 3 torques, $\varepsilon_t^T$, of the relative squared prediction error in \eqref{eq:pred_error}. Therefore, the following figures will be  analogous to the Figures \ref{fig:prediction_error} and \ref{fig:prediction_error_boxplot}.}

The parameters $M$ (number of past temporal lags used to form the features vector $\xi(t)$, see equations  \eqref{eqRdiag},  \eqref{eq:input_locations_features_free} and \eqref{eq:input_locations_features_vel_acc}, and the number of features, $k$, in the reduced derivative-free models are set as follows: 
\begin{itemize}
\item $M$ is fixed equal to $10$; larger values have been tested with no significant differences in performance. 
\item $k$ is fixed to $3$. A discussion on this choice as well as some results with different choices can be found later in this Section, see e.g. Figure \ref{fig:prediction_error_boxplot_FSF_2_3_4}.
\end{itemize}

 The hyperparameters are estimated using marginal likelihood (ML)  maximization, since performing Cross Validation via gridding in high dimension is computationally unfeasible. 
%, therefore the comparison is done w.r.t. the analogous model with ``standard'' input locations NP-ML, which also outperforms NP-VS.

%--- COMPARISON
%The following comparisons are taken in order to analyze the derivative-free models. First, we compare only the nonparametric derivative-free models and the best  (accordingly to the results of the previous section) standard models to validate the two techniques FSF and DFSR. Second, 
%COMPARISON:
%\begin{itemize}
%\item first, we use NP model (because concept that NP we can do whatever we want) to validate which derivative-free technique to use (similar results to all the other models)
%\item NP FSF selected we analyse the number of features
%\item Comparison all FSF
%\end{itemize}
In the first comparison, the nonparametric methods (NP-ML) is compared to its derivative-free versions NP-ML-DF, NP-ML-DFW, NP-ML-DFR and NP-ML-DFSR.

\begin{figure}[hbtp]
\centering
%scale=0.44
\includegraphics[width=\columnwidth]{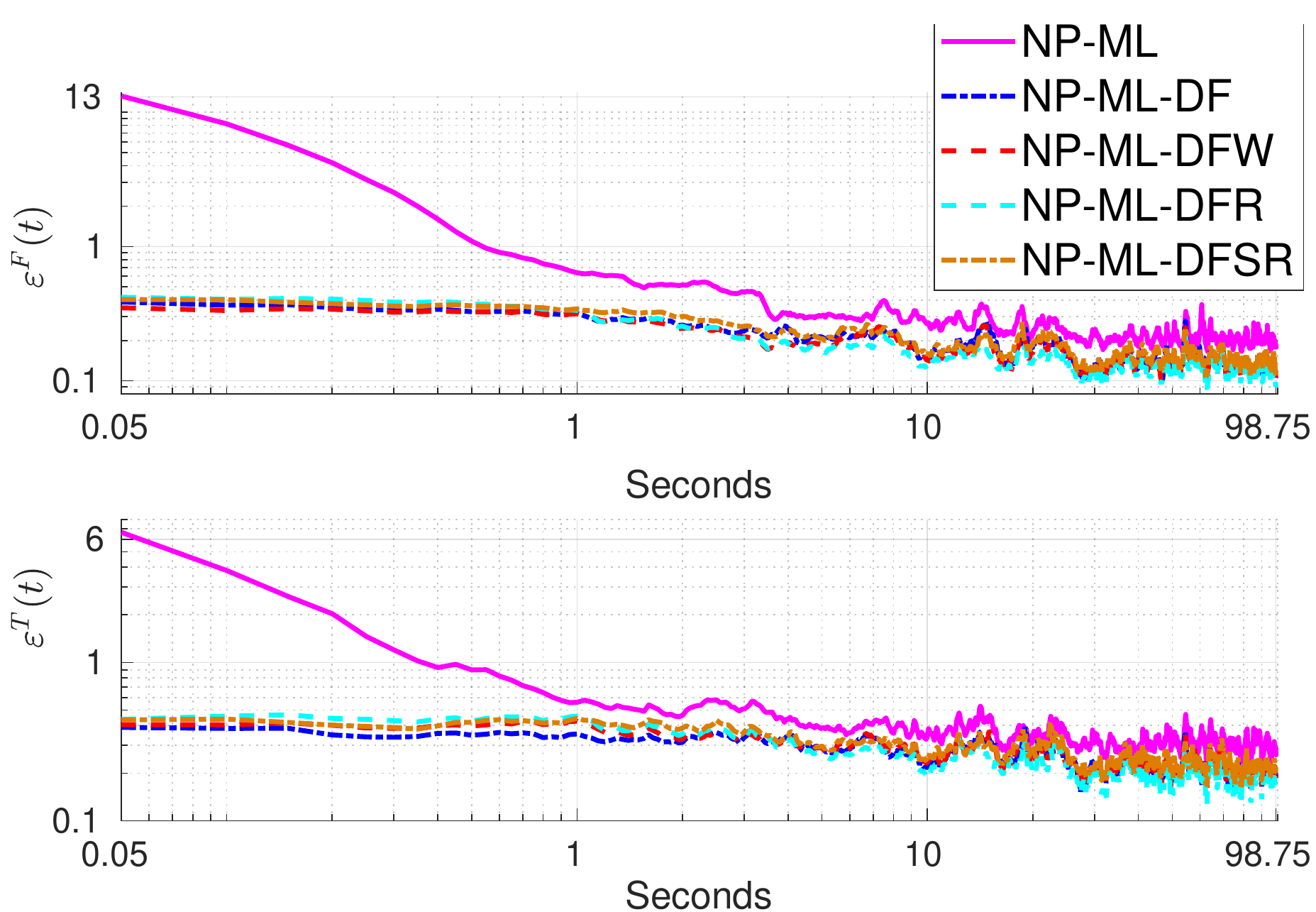}
\caption{Average (over the 5 subsets of 100 seconds each) of the relative squared prediction errors $\varepsilon^F(t)$ and $\varepsilon^T(t)$, computed with $T=25$ corresponding to a  horizon of 1.25 seconds. A  log-log scale is used for ease of readability.}
\label{fig:prediction_error_stand_FSF_FVA}
\end{figure}

In Figure \ref{fig:prediction_error_stand_FSF_FVA} the averaged (over $5$ realisations) time evolutions of  $\varepsilon^F(t)$ and $\varepsilon^T(t)$ are illustrated. All the nonparametric derivative-free  models perform comparably and outperform NP-ML, both in transient (more significant) as well as in steady state. The distribution of the steady state errors is shown using boxplots in Figure \ref{fig:prediction_error_boxplot_stand_FSF_FVA}. It is clear that all the derivative-free methods outperform NP-ML, but also that NP-ML-DFR (which uses a reduced rank derivative-free feature) is the best performing method.  This confirms that the dimensionality reduction in equation \eqref{eq:input_locations_derivativefree_general} captures the relevant information and allows to  reduce the variance of the estimators. 

\begin{figure}[hbtp]
\centering
%scale=0.68
\includegraphics[width=\columnwidth]{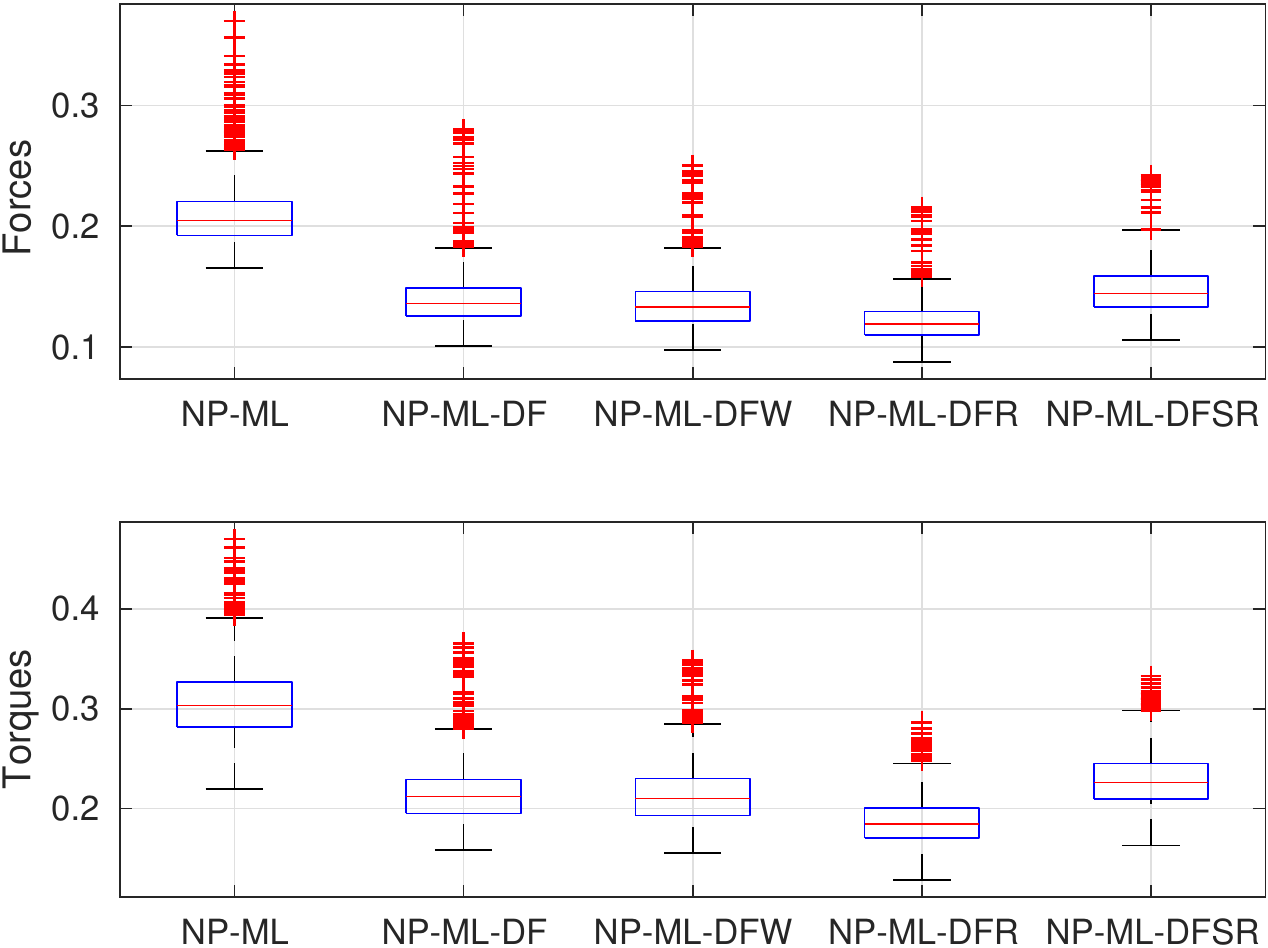}
\caption{Boxplots of the steady state (i.e.  after $30$ seconds, see also Figure \ref{fig:prediction_error_stand_FSF_FVA}) of the relative squared prediction errors 
$\varepsilon^F(t)$ and $\varepsilon^T(t)$, computed with $T=25$ corresponding to a  horizon of 1.25 seconds.}
\label{fig:prediction_error_boxplot_stand_FSF_FVA}
\end{figure} 

As anticipated, the results in Figure \ref{fig:prediction_error_stand_FSF_FVA} and Figure \ref{fig:prediction_error_boxplot_stand_FSF_FVA} are obtained setting $k=3$ in NP-ML-DSR. The reasons for this choice is that, when using numerical differentiation, the input location vector $x(t)$ contains exactly 3 components (position, velocity and acceleration) for each DoF. It is natural to ask what happens as $k$ changes. In Figure  \ref{fig:prediction_error_boxplot_FSF_2_3_4} the steady state behaviour\footnote{The transient behaviour has been omitted because it does not add much information with respect to the steady state statistics.} of $\varepsilon^F(t)$ and of $\varepsilon^T(t)$ is analysed as a function of $k=1,2,3,4$.

It is apparent from Figure \ref{fig:prediction_error_boxplot_FSF_2_3_4} that $k=1$ and $k=2$ are not sufficient and lead to a considerable performance degradation; in addition, no improvement is obtained  increasing $k$ beyond $3$ (compare $k=3$ and $k=4$ in Figure \ref{fig:prediction_error_boxplot_FSF_2_3_4}). Larger values (e.g. $k=5$) have also been tested leading to similar results and have been therefore omitted. 
%Note, in particular, that the estimator NP-ML-DF (which uses $R=I$) is (essentially) equivalent to the estimator NP-ML-DFR with   $k=M=10$; 
The results in Figure  \ref{fig:prediction_error_boxplot_stand_FSF_FVA} show that the performance of NP-ML-DF (which uses $R=I$)
 is worse than NP-ML-DFR with $k=3$. Overall this suggests that, indeed, $k=3$ is the optimal choice for this specific application.
 NP-ML-DFR with $k=3$ performs better than NP-ML-DFSR. Considering the bias-variance trade-off dilemma, the latter method introduces more bias and less variance than the former, suggesting that, in these experimental results, the bias introduced by NP-ML-DFSR is preponderant with respect to the reduction of the variance.

\begin{figure}[hbtp]
\centering%
\includegraphics[width=\columnwidth]{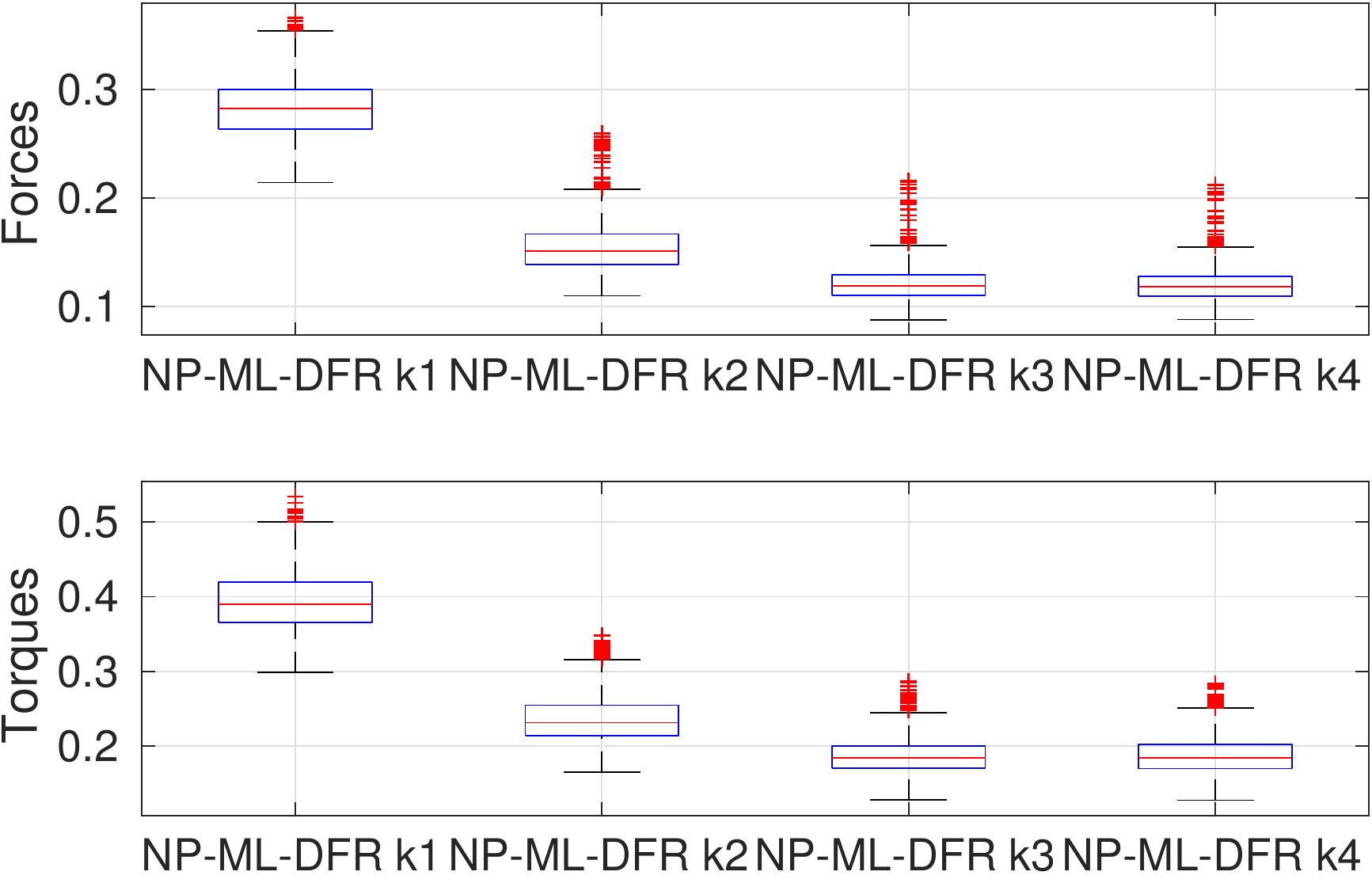}
\caption{Boxplots of the steady state (i.e. after $30$ seconds) of the relative squared prediction errors 
$\varepsilon^F(t)$ and $\varepsilon^T(t)$, computed with $T=25$ corresponding to a  horizon of 1.25 seconds.}
\label{fig:prediction_error_boxplot_FSF_2_3_4}
\end{figure}

Finally,  a  comparison between the estimators using numerical derivatives and the derivative-free methods (and in particular DFR) is provided for the semiparametric models.

In Figure \ref{fig:prediction_error_SP_comparison} and Figure \ref{fig:prediction_error_boxplot_SP_comparison} the behaviour of $\varepsilon^F(t)$ and of $\varepsilon^T(t)$ for the models SPK-ML-DFR and SPK-ML is illustrated.
\begin{figure}[hbtp]
\centering
\includegraphics[width=\columnwidth]{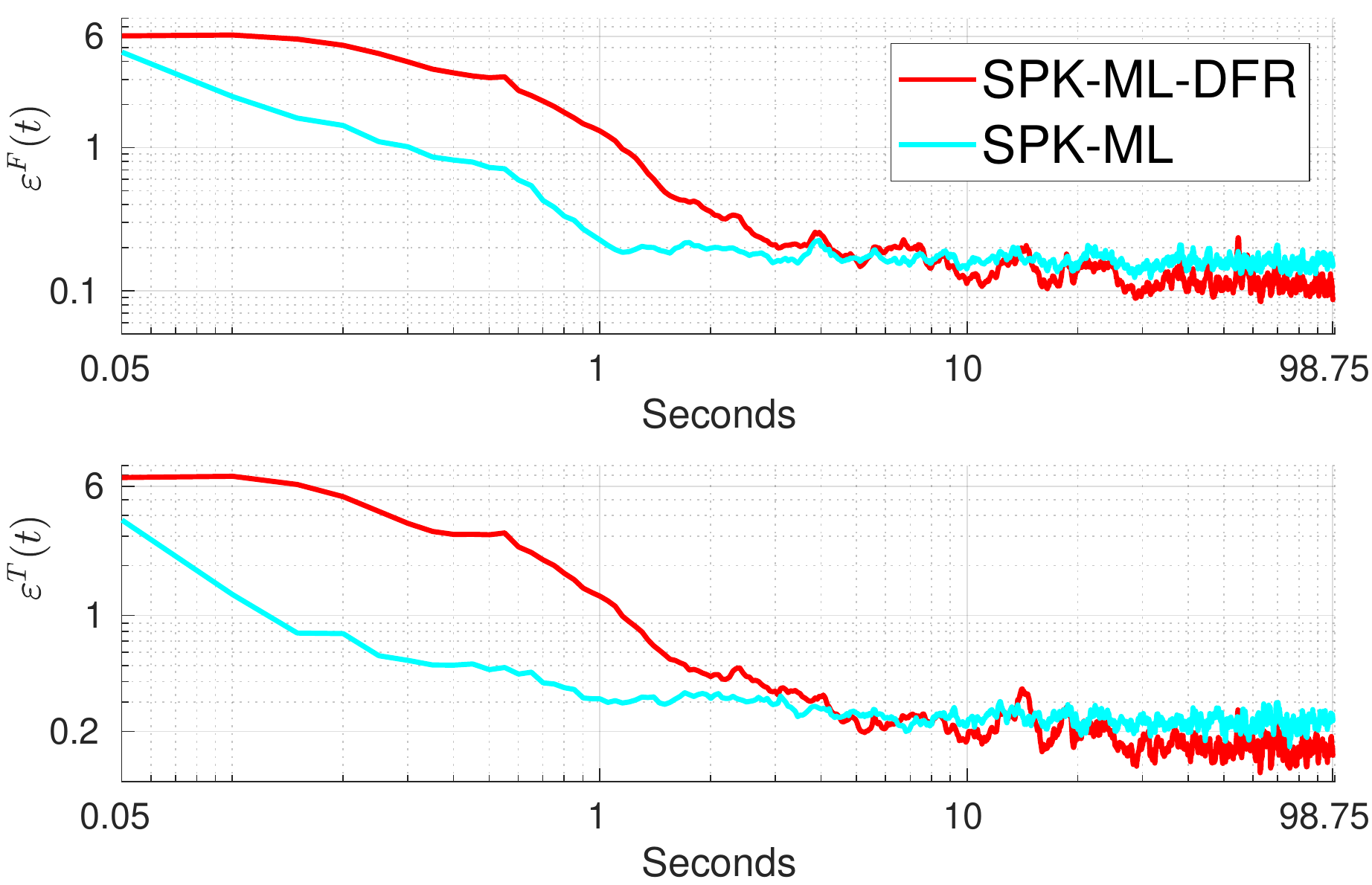} 
\caption{Average (over the 5 subsets of 100 seconds each) of the relative squared prediction errors $\varepsilon^F(t)$ and $\varepsilon^T(t)$, computed with $T=25$ corresponding to a  horizon of 1.25 seconds. A  log-log scale is used for ease of readability.}
\label{fig:prediction_error_SP_comparison}
\end{figure}

\begin{figure}[hbtp]
\centering%
\includegraphics[width=\columnwidth]{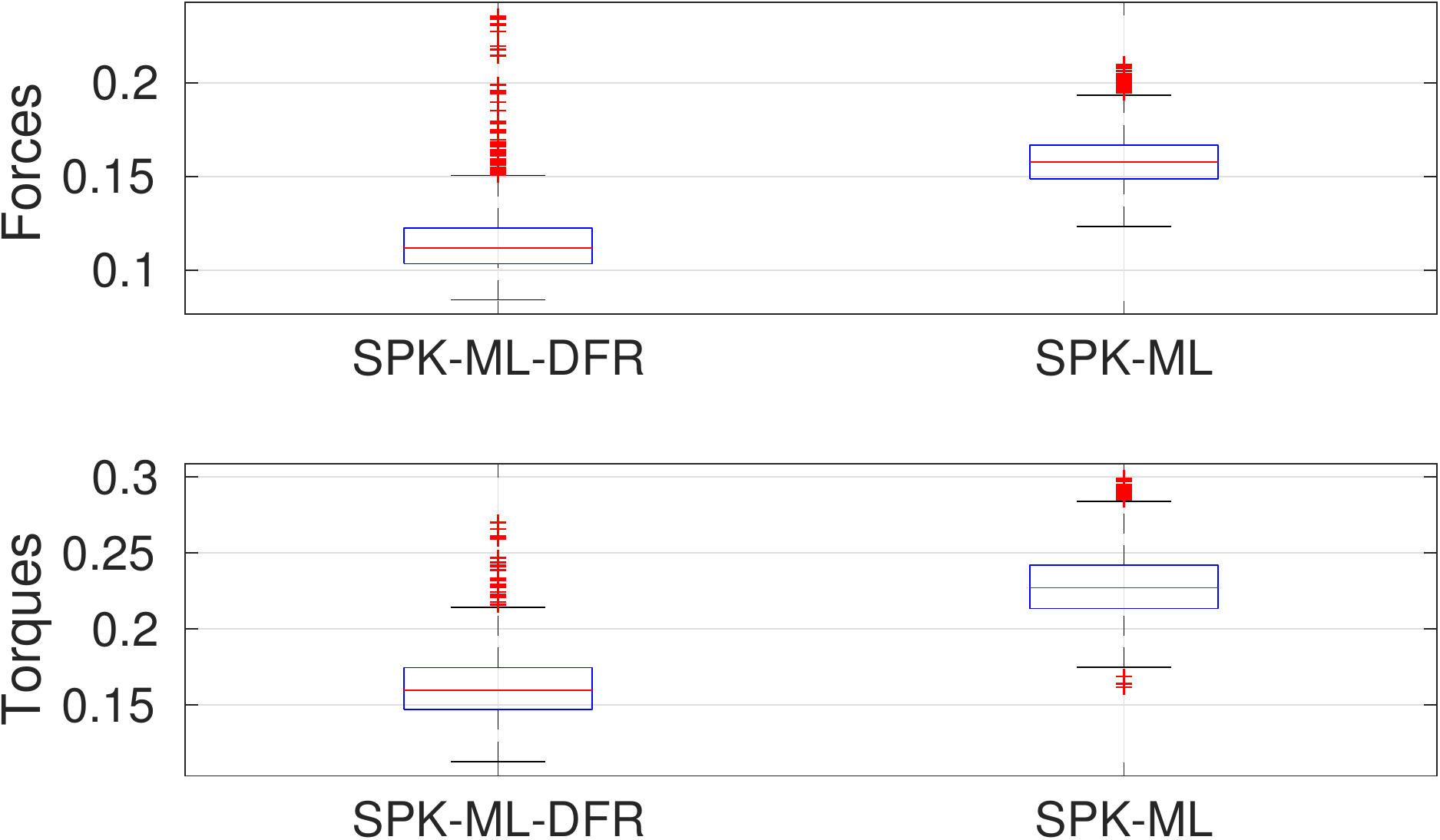}
\caption{Boxplots of the steady state (i.e.  after $30$ seconds, see also Figure \ref{fig:prediction_error_SP_comparison}) of the relative squared prediction errors 
$\varepsilon^F(t)$ and $\varepsilon^T(t)$, computed with $T=25$ corresponding to a  horizon of 1.25 seconds.}
\label{fig:prediction_error_boxplot_SP_comparison}
\end{figure}

The two semiparametric methods have  similar initial performance, but SPK-ML has a a better learning rate in the transient. This behaviour  might  be attributed to the parametric component of the model since in SPK-ML-DFR the physical meaning of the features is lost. However, in steady state the semiparametric derivative-free model outperforms the standard model with an improvement of about $30\%$ in terms of  relative error, see Figure \ref{fig:prediction_error_boxplot_SP_comparison}.

The results obtained in this section give an empirical evidence that learning the features directly from the past history of the measured joint trajectories is a rather promising direction.

%------------------------------------------------------------------------%
\subsection{Comparison among DFR-like models}
%------------------------------------------------------------------------%
Finally a comparison among the models with DFR features, namely, P-ML-DFR, NP-ML-DFR, SP2-ML-DFR and SPK-ML-DFR, is presented in Figures \ref{fig:prediction_error_FSF_all} and \ref{fig:prediction_error_boxplot_FSF_all}.

\begin{figure}[hbtp]
\centering
\includegraphics[width=\columnwidth]{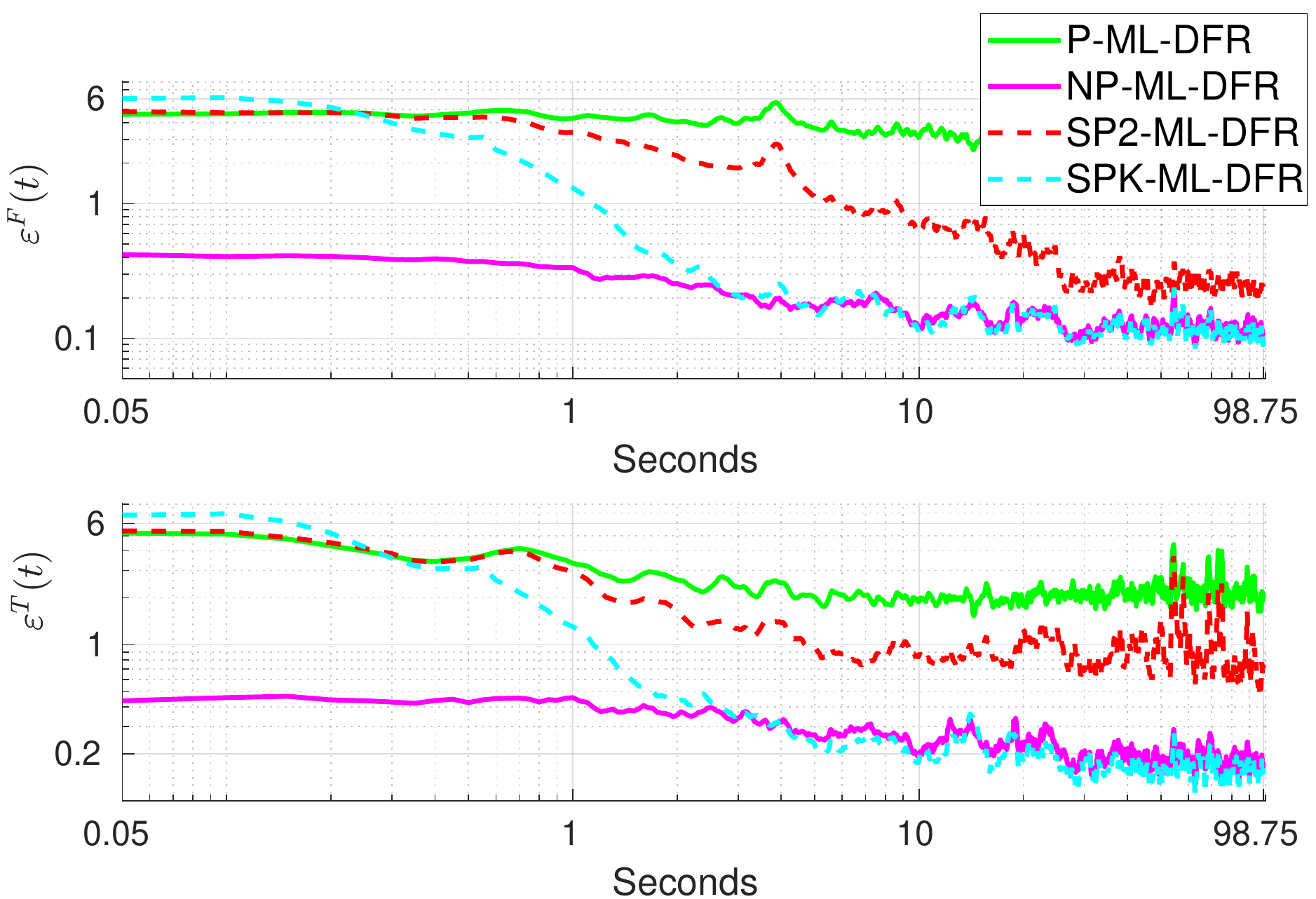}
\caption{Average (over the 5 subsets of 100 seconds each) of the relative squared prediction errors $\varepsilon^F(t)$ and $\varepsilon^T(t)$, computed with $T=25$ corresponding to a  horizon of 1.25 seconds. A  log-log scale is used for ease of readability.\label{fig:prediction_error_FSF_all}}
\end{figure}

\begin{figure}[hbtp]
\centering%
\includegraphics[width=\columnwidth]{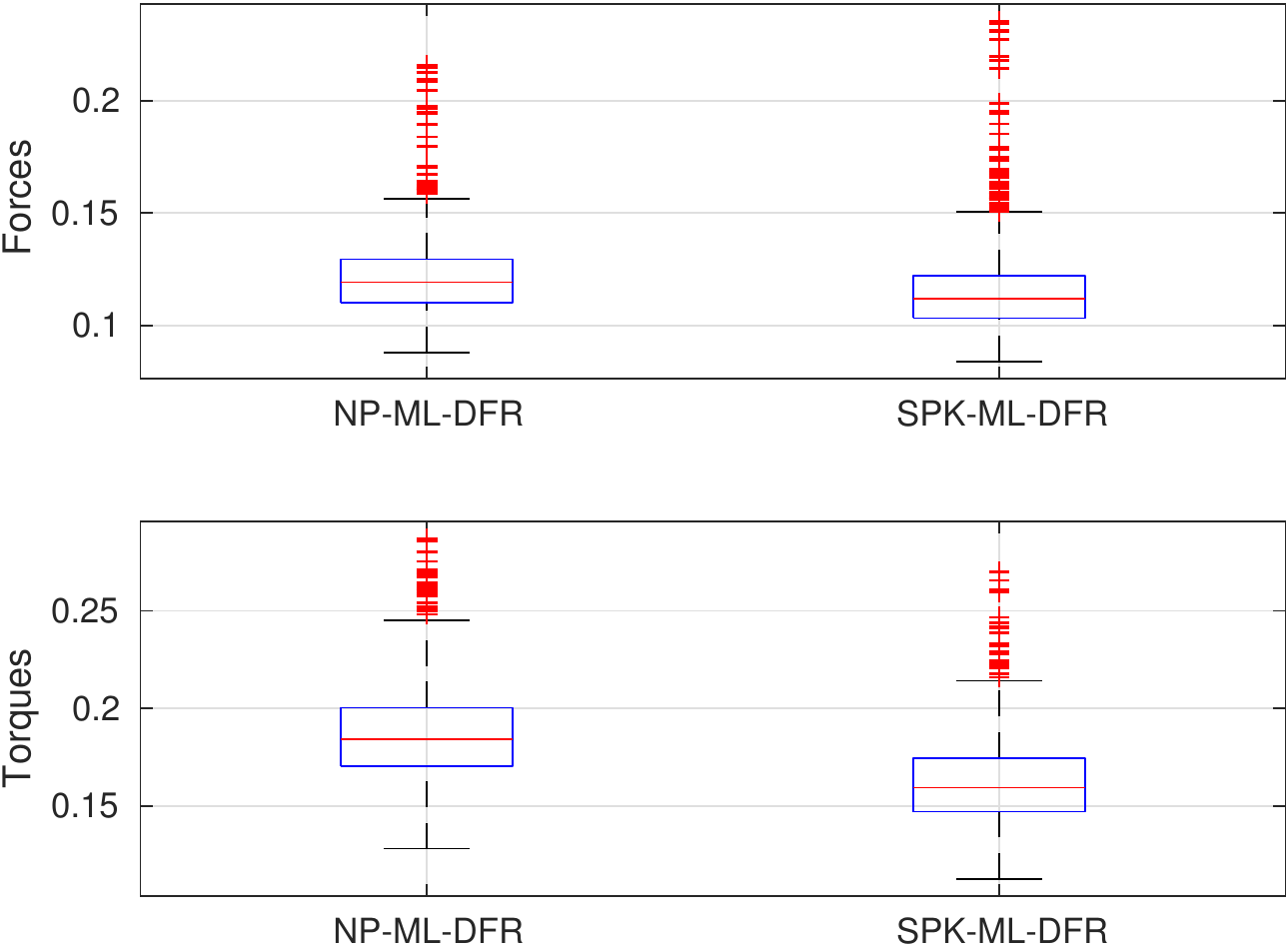}
\caption{Boxplots of the steady state (i.e. after $30$ seconds, see also Figure \ref{fig:prediction_error_FSF_all}) of the relative squared prediction errors 
$\varepsilon^F(t)$ and $\varepsilon^T(t)$, computed with $T=25$ corresponding to a  horizon of 1.25 seconds.}
\label{fig:prediction_error_boxplot_FSF_all}
\end{figure}

%The parametric model P-ML-FSF appears insufficent to describe the dynamics of the system with poor performance both in the transitory and at steady state. The semiparametric model SP2-ML-FSF is significantly affected by its parametric component, with a transitory analogous to the P-ML-FSF model and steady state performance that are still not satisfactory. This means that the residuals obtained by the parametric model are hard to be described. The outstanding performance are achieved by the NP-ML-FSF model with the highest prediction performance. It is notable that the the nonparametric model equipped with the derivative-free input location has higher generalization performance than the parametric one; this confirms the insight that the parametric model used is inadeguate and that the input location learning method FSF is a promising techinque to describe this kind of data.

The transient performance can be analyzed by observing the first 30 seconds in Figure \ref{fig:prediction_error_FSF_all}. Similarly to P-ML, also the parametric model P-ML-DFR has very poor performance (note that the relative error is always larger than 1); the poor performance of the parametric models seems to negatively affect also the transient behaviour of the semiparametric models. As a matter of fact, the transient performances of SP2-ML-DFR and  SPK-ML-DFR are very poor. 
The nonparametric model, NP-ML-DFR, definitely outperforms all the other models in terms of transient, suggesting that the parametric models are probably  inadequate
to capture the dynamic behaviour. 

The last 60 seconds of the experiment reported in  Figure \ref{fig:prediction_error_FSF_all} and the boxplots in  Figure \ref{fig:prediction_error_boxplot_FSF_all} (only for NP-ML-DFR and SPK-ML-DFR) illustrate the behaviour at steady state. The suboptimal model SP2-ML-DFR is largely  unsatisfactory, which can be probably attributed to the bias error introduced by the parametric component estimated by least squares. 

%------------------------------------------------------------------------%
\subsection{Experimental results discussion}
%------------------------------------------------------------------------%

We can summarise as follows the findings of our extensive experimental study: 
\begin{itemize}
\item The derivative-free methods outperform the schemes based on numerical differentiation; this is even more remarkable if one recalls that the numerical derivatives have been computed starting from data at a higher sampling rate, i.e. they have practically used a richer dataset (which ideally should have lead to  better noise-rejection properties). The main reason is probably to be attributed to the fact that the derivatives on which the ``classical'' models  rely are computed using numerical differentiation schemes from the measured positions, which are subject to noise. However, the models \eqref{nonpar_model}, \eqref{eq:SP-Mean_model_statistics} and \eqref{modelloSPK} do not account for this noise and operate as if the measurements were correct. 
Instead, the derivative free method performs a reduction, which can be assimilated to computing derivatives and accelerations, but this reduction is computed as part of the modelling exercise, and as such its effect on wrench prediction is directly accounted for.
\item The transient performances of semiparametric models strongly depend on the availability of an accurate (physical) model. Indeed, when using derivative-free methods wherein the  extracted features are not  guaranteed to approximate velocities and accelerations, the parametric model loses physical meaning. This is confirmed by the poor transient performance of  P-ML-DFR, SP2-ML-DFR and SPK-ML-DFR (see Figure \ref{fig:prediction_error_FSF_all}).
% \textcolor{blue}{Instead, when the physical model outperforms the nonparametric model in generalization properties than the semiparametric models achieve the best %performance, as it is confirmed in Figure \ref{fig:prediction_error}. }
\item Depending on whether transient or steady state performance is to be favoured, NP-ML-DFR or SPK-ML-DFR  should be preferred respectively (see Figures \ref{fig:prediction_error_FSF_all}, \ref{fig:prediction_error_boxplot_FSF_all}).
\end{itemize}

%%===============================================================================

%----------------------------------------------------------%
\section{Conclusions}
\label{sec:conclusions}
%----------------------------------------------------------%

In this paper, several models and algorithms for online inverse dynamics learning have been discussed and compared in a common framework. Parametric and nonparametric methods have been considered, as well as semiparametric models obtained by combining the two. Different strategies, Cross Validation (CV) and Marginal Likelihood optimisation (ML),  for estimating the hyperparameters have been compared and also new model structures which do not rely on pre-computed numerical derivatives have been introduced.

All these algorithms have been thoroughly tested on real data from the right arm of the iCub robot; the comparison has been performed both in terms of transient behaviour (how fast algorithms can adapt to a new experimental setup) as well as steady state behaviour. 

Overall our experimental validation suggests that:
\begin{itemize}
\item The non-parametric model (NP) and the Semiparametric model with RBD mean (SP2) perform better when trained using Marginal Likelihood optimisation rather than Cross Validation algorithms available in \cite{SEMIPARAMTERIC_2016}; \item Semiparametric methods which exploit physical insight, when using numerical derivatives,  outperform purely nonparametric structures;
%\item \textcolor{blue}{Semiparametric methods, when equipped with physical models that shows better generalization performances than nonparametric models, outperform purely nonparametric structures both in transient and at steady state, otherwise, only in steady state;}
\item Derivative-free methods are definitely advantageous w.r.t. ad-hoc methods which rely on numerical differentiation, when applied to purely non-parametric model structures (see e.g. NP-ML-DFR);
\item Endowing semiparametric methods with derivative-free schemes is not entirely trivial; these new features do not yield improvements in the parametric model as much as they do in the nonparametric one. As a result the transient behaviour of SPK-ML-DFR is significantly worse than its nonparametric counterpart; instead, SPK-ML-DFR outperforms its nonparametric counterpart in steady state.
\end{itemize}

This last item  calls for further research efforts in the future, which would hopefully allow to fully exploit the benefits of derivative-free methods coupled with semiparametric model structures.  Our future agenda includes further comparison with parametric methods which account for physical consistency of the parameters (see for instance \cite{TraversaroBEN16}) as well as the implementation of control strategies which exploit  the estimated models.  
  
%%%%%%%%%%%%%%%%%%%%%%%%%%%%%%%%%%%%%%%%%%%%%%%%%%%%%%%%%%%%%%%%%%%%%%%%%%%%%%%%
\section{ACKNOWLEDGMENTS}
The authors gratefully acknowledge the other teams involved in the FIRB project ``Learning meets time'' (RBFR12M3AC), in particular  iCub Facility group (Francesco Nori, Giorgio Metta)  and Lorenzo Rosasco (LCSL-IIT@MIT) for making their data and code available to us.

%%%%%%%%%%%%%%%%%%%%%%%%%%%%%%%%%%%%%%%%

\appendix\label{AppA}
\subsection{Proof of Proposition \ref{proposition:connection_SPK_SPmean}}
\label{app:proof}

First, note that both the SP model and the SPK model can be rewritten as
\al{\label{model_proof} y(t)=\psi^\top(x(t))\pi +f(x(t))+e(t)}
where :
\begin{enumerate}
\item $\pi$ is an unknown but fixed quantity for the SP model;
\item $\pi$ is a zero mean Gaussian random vector with covariance matrix $\gamma^2 I_p$ for the SPK model.
\end{enumerate} In both models, $f(\cdot)$ is a Gaussian process with kernel function
\al{K(x(t),x(s))=\rho^2 K_G(x(t),x(s)).\nn}

A well known connection between Bayes and Fisher (i.e. assuming the  parameter $\pi$ is an unknown but fixed quantity) estimators, is that the latter can be obtained as a limiting case of the former when:
\begin{itemize}
\item the parameter $\pi$ is modeled as a zero mean Gaussian vector with covariance matrix $\gamma^2 I_p$
\item the variance of the prior distribution of $\pi$ is let go to infinity  by letting $\gamma^2\rightarrow \infty$.
\end{itemize}
This proves the first part of the Proposition. We proceed to prove the second part.  Let us stack the available data $y(t)$, $t=1\ldots N$ in the vector $\mathbf{y}$ and stack correspondingly the regressors $\psi^\top(x(t)) $ and $f(x(t))$ in the matrix $\Psi$ and vector $\mathbf{f}$, respectively, so that models \eqref{model_proof} can be written as 
\begin{equation}\label{Vec:model}
\mathbf{y} = \Psi \pi + \mathbf{f} + \mathbf{e}
\end{equation}
where $\mathbf{e}$ is defined with the same rule as $\mathbf{y}$. Moreover, we define $\mathbf{K}(\mathbf x,\mathbf x)=\mathrm{cov}[\mathbf f,\mathbf f]$ and $\mathbf{K}(\cdot, \mathbf x )=\mathrm{cov}[f(\cdot),\mathbf f]$.
The minimum variance estimators of $\pi$ and $f$ under 2) and given $\rho^2,\sigma^2,\tau$, are:
\al{
 \hat\pi &= \mathrm{cov}[\pi,\mathbf{y}] \mathrm{Var}^{-1}[{\mathbf{y}}] \mathbf{y} \nn \\ & =   \gamma^2 \Psi^\top \left(\gamma^2 \Psi \Psi^\top + \mathbf{K}(\mathbf x,\mathbf x) + \sigma^2 I\right)^{-1} \mathbf{y} \label{Bayes}  \\
\hat f(\cdot) &= \mathrm{cov}[f(\cdot),\mathbf{y}] \mathrm{Var}^{-1}[\mathbf{y}] \mathbf{y} \nn \\ &  =   \mathbf{K}(\cdot, \mathbf x ) \left(\gamma^2 \Psi \Psi^\top + \mathbf{K}(\mathbf x,\mathbf x) + \sigma^2 I\right)^{-1} \mathbf{y}. \nn
}
Defining $R:=\mathbf{K}(\mathbf x,\mathbf x) + \sigma^2 I$ and using the matrix inversion lemma we have:
\al{
& \left(\gamma^2  \Psi \Psi^\top + \mathbf{K}(\mathbf x,\mathbf x) + \sigma^2 I\right)^{-1} \nn\\ & \hspace{0.5cm}=  \left(\gamma^2 \Psi \Psi^\top + R\right)^{-1}\nn \\
&\hspace{0.5cm} =R^{-1} - R^{-1} \Psi \left(\Psi^\top R^{-1} \Psi + \gamma^{-2}I\right)^{-1} \Psi^\top R^{-1}
\nn
}
so that, from \eqref{Bayes}
\al{
\hat\pi &= \gamma^2 \left(I   -  \Psi^\top R^{-1} \Psi \left(\Psi^\top R^{-1} \Psi + \gamma^{-2}I\right)^{-1} \right)\Psi^\top R^{-1} \mathbf{y} \nn\\
& =  \left(\Psi^\top R^{-1} \Psi + \gamma^{-2}I\right)^{-1}\Psi^\top R^{-1} \mathbf{y}  \nn
,}
and similarly
\al{
\hat f(\cdot) &= \mathbf{K}(\cdot,\mathbf x) R^{-1} \left(I   -    \Psi \left(\Psi^\top R^{-1} \Psi + \gamma^{-2}I\right)^{-1} \Psi^\top R^{-1} \right)  \mathbf{y} \nn\\
& =  \mathbf{K}(\cdot,\mathbf x)  R^{-1} [\mathbf{y} - \Psi \hat \pi] . 
\nn} Clearly, as $\gamma^2 \rightarrow \infty$, we have that $\hat\pi$ converges to the \emph{weighted} least squares estimate
\begin{equation}\label{WLS}
 \hat \pi_{WLS} =\left(\Psi^\top R^{-1} \Psi \right)^{-1}\Psi^\top R^{-1} \mathbf{y} 
\end{equation}
and $\hat f$ converges to 
$$
\hat f(\cdot)  =\mathbf{K}(\cdot,\mathbf x)  R^{-1} [\mathbf{y} - \Psi \hat \pi_{WLS}] . 
$$

On the other hand the marginal  likelihood function for model \eqref{Vec:model}, under 1), i.e.  when $\pi$ is considered as an unknown parameter, has the form:
$$
\begin{array}{rcl}
L_{SP}(\mathbf{y}) & = & -2 {\rm log}\,(p_\eta(\mathbf{y}))  \\
& = &{\rm log}({\rm det}(2\pi R)) + (\mathbf{y}-\Psi\pi)^\top R^{-1} (\mathbf{y}-\Psi\pi).
\end{array}
$$
When $\rho^2,\sigma^2,\tau$ are kept fixed, the minimization with respect to $\pi$ can be performed in closed form, and yields exactly the weighted least squares solution
\eqref{WLS}.  However, even for $\gamma^2 \rightarrow \infty$, the marginal likelihoods of $\mathbf{y}$ given the hyperparameters $\rho,\sigma^2,\tau$ under 1) and 2) are different. In fact, if  1) is postulated, and $\pi$ is solved as above,  one obtains the  \emph{profile} marginal log-likelihood $\hat L_{SP}(\mathbf{y}) := L_{SP}(\mathbf{y})_{|  \pi= \hat\pi_{WLS}}$
\begin{equation}
\hat L_{SP}(\mathbf{y}) = {\rm log}({\rm det}(2\pi R)) + (\mathbf{y}-\Psi \hat \pi_{WLS} )^\top R^{-1} (\mathbf{y}-\Psi \hat \pi_{WLS} ) \nn	
\end{equation}
where  the hyperparameters $\rho^2,\sigma^2,\tau$ are hidden in the definition of $R = \mathbf{K}(\mathbf x,\mathbf x) + \sigma^2 I$.

Instead, if 2) is postulated, the marginal log-likelihood takes the form 
$$
L_{SPK}(\mathbf{y}) = {\rm log}({\rm det}(2\pi(\gamma^2 \Psi\Psi^\top + R))) + \mathbf{y}^\top (\gamma^2 \Psi\Psi^\top + R)^{-1}\mathbf{y}.
$$
Using, as above, the matrix inversion Lemma on $(\gamma^2 \Psi\Psi^\top + R)$, the {\em Sylvester} determinant identity 
and defining $\hat \pi:=(\Psi^\top R^{-1}\Psi + \gamma^{-2}I)^{-1} \Psi R^{-1} \mathbf{y}$, we obtain 
\al{
L_{SPK}(\mathbf{y}) =  {\rm log}&({\rm det}(2\pi R) )+  {\rm log}({\rm det}(I_p + \gamma^2\Psi^\top R^{-1} \Psi)) \nn \\
&  + (\mathbf{y}-\Psi \hat\pi)^\top R^{-1} (\mathbf{y}-\Psi \hat\pi) 	\nn \\
& +  \hat\pi^\top \Psi^\top R^{-1} (\mathbf{y}-\Psi \hat \pi) .\nn
}As $\gamma^2 \rightarrow \infty$ we have that $\hat \pi \rightarrow \hat\pi_{WLS}$ and $  \hat\pi^\top \Psi^\top R^{-1} (\mathbf{y}-\Psi \hat \pi)
\rightarrow 0$ so that\footnote{The symbol $\approx$ denotes equality in the limit as $\gamma \rightarrow\infty$.}
\al{L_{SPK}(\mathbf{y})  \approx  {\rm log}&({\rm det}(2\pi R)) +  {\rm log}({\rm det}(I_p+ \gamma^2\Psi^\top R^{-1} \Psi)) \nn\\
&  + (\mathbf{y}-\Psi \hat\pi_{WLS})^\top R^{-1} (\mathbf{y}-\Psi \hat\pi_{WLS}) . \nn
}
The second term ${\rm log}({\rm det}(I_p + \gamma^2\Psi^\top R^{-1} \Psi)) $ can be manipulated as follows:
\al{
{\rm log}& ({\rm det}(I_p + \gamma^2\Psi^\top R^{-1} \Psi))\nn  \\
&= {\rm log}({\rm det}(\gamma^2 I_p)) +  {\rm log}({\rm det}(\gamma^{-2} I_p + \Psi^\top R^{-1} \Psi))\nn\\
& \approx  p \, {\rm log} \,\gamma^2+  {\rm log}({\rm det}( \Psi^\top R^{-1} \Psi))
 \nn}
 where the last approximation clearly holds when $\gamma^2 \rightarrow\infty$.
Inserting the last expression in $L_{SPK}({\mathbf{y}})$ we obtain that
\begin{equation}\label{ML_K2}
\begin{array}{rcl}
L_{SPK}(\mathbf{y}) &\approx& \!\! {\rm log}({\rm det}(2\pi R)) + p  {\rm log} \gamma^2+  {\rm log}({\rm det}( \Psi^\top R^{-1} \Psi)) \nn\\
& & + (\mathbf{y}-\Psi \hat\pi_{WLS})^\top R^{-1} (\mathbf{y}-\Psi \hat\pi_{WLS})  \nn\\
&\approx &\!\!  \hat L_{SP}(\mathbf{y})  + {\rm log}({\rm det}( \Psi^\top R^{-1} \Psi))  + p \, {\rm log} \,\gamma^2
\end{array}
\end{equation}
which shows that  the the two log-likelihoods differ, up to the constant $ {\rm log}({\rm det}(\gamma^2 I_p)) = p \, {\rm log} \,\gamma^2$ which is not a function of $\rho,\sigma^2,\tau$,  for a nontrivial term ${\rm log}({\rm det}( \Psi^\top R^{-1} \Psi)) $ which have an influence on the location of their minima.  \qed

\bibliographystyle{IEEEtran}
\bibliography{biblio}

% Generated by IEEEtran.bst, version: 1.13 (2008/09/30)
\begin{thebibliography}{10}
\providecommand{\url}[1]{#1}
\csname url@samestyle\endcsname
\providecommand{\newblock}{\relax}
\providecommand{\bibinfo}[2]{#2}
\providecommand{\BIBentrySTDinterwordspacing}{\spaceskip=0pt\relax}
\providecommand{\BIBentryALTinterwordstretchfactor}{4}
\providecommand{\BIBentryALTinterwordspacing}{\spaceskip=\fontdimen2\font plus
\BIBentryALTinterwordstretchfactor\fontdimen3\font minus
  \fontdimen4\font\relax}
\providecommand{\BIBforeignlanguage}[2]{{%
\expandafter\ifx\csname l@#1\endcsname\relax
\typeout{** WARNING: IEEEtran.bst: No hyphenation pattern has been}%
\typeout{** loaded for the language `#1'. Using the pattern for}%
\typeout{** the default language instead.}%
\else
\language=\csname l@#1\endcsname
\fi
#2}}
\providecommand{\BIBdecl}{\relax}
\BIBdecl

\bibitem{craig2005introduction}
J.~J. Craig, \emph{Introduction to robotics: mechanics and control}.\hskip 1em
  plus 0.5em minus 0.4em\relax Pearson Prentice Hall Upper Saddle River, 2005,
  vol.~3.

\bibitem{nguyen2009model}
D.~Nguyen-Tuong, M.~Seeger, and J.~Peters, ``Model learning with local
  {G}aussian process regression,'' \emph{Advanced Robotics}, vol.~23, no.~15,
  pp. 2015--2034, 2009.

\bibitem{KARIMI}
A.~Karimi, M.~Butcher, and R.~Longchamp, ``Model-free precompensator tuning
  based on the correlation approach,'' \emph{IEEE Transactions on Control
  Systems Technology}, vol.~16, no.~5, pp. 1013--1020, Sept 2008.

\bibitem{fliess2009model}
M.~Fliess, ``Model-free control and intelligent {PID} controllers: towards a
  possible trivialization of nonlinear control?'' \emph{IFAC Proceedings
  Volumes}, vol.~42, no.~10, pp. 1531--1550, 2009.

\bibitem{Nguyen-Tuong2011}
D.~Nguyen-Tuong and J.~Peters, ``Model learning for robot control: a survey,''
  \emph{Cognitive Processing}, vol.~12, no.~4, pp. 319--340, 2011.

\bibitem{IvaldiICRA2015}
R.~Calandra, S.~Ivaldi, M.~P. Deisenroth, E.~Rueckert, and J.~Peters,
  ``Learning inverse dynamics models with contacts,'' in \emph{2015 IEEE
  International Conference on Robotics and Automation (ICRA)}, May 2015, pp.
  3186--3191.

\bibitem{siciliano2010robotics}
B.~Siciliano, L.~Sciavicco, L.~Villani, and G.~Oriolo, \emph{Robotics:
  modelling, planning and control}.\hskip 1em plus 0.5em minus 0.4em\relax
  Springer Science \& Business Media, 2010.

\bibitem{hollerbach2008model}
J.~Hollerbach, W.~Khalil, and M.~Gautier, ``Model identification,'' in
  \emph{Springer Handbook of Robotics}.\hskip 1em plus 0.5em minus 0.4em\relax
  Springer, 2008, pp. 321--344.

\bibitem{Rasmussen}
C.~Rasmussen and C.~Williams, \emph{{G}aussian Processes for Machine
  Learning}.\hskip 1em plus 0.5em minus 0.4em\relax The MIT Press, 2006.

\bibitem{prando2015classical}
G.~Prando, D.~Romeres, G.~Pillonetto, and A.~Chiuso, ``Classical vs. {B}ayesian
  methods for linear system identification: point estimators and confidence
  sets,'' in \emph{Proc. of ECC}, 2016.

\bibitem{ICRA2010NguyenTuong_62320}
D.~Nguyen-Tuong and J.~Peters, ``Using model knowledge for learning inverse
  dynamics,'' in \emph{IEEE International Conference on Robotics and
  Automation}, 2010.

\bibitem{wu2012semi}
T.~Wu and J.~Movellan, ``Semi-parametric {G}aussian process for robot system
  identification,'' in \emph{IEEE/RSJ International Conference on Intelligent
  Robots and Systems (IROS)}, 2012, pp. 725--731.

\bibitem{pan2010survey}
S.~J. Pan and Q.~Yang, ``A survey on transfer learning,'' \emph{IEEE
  Transactions on Knowledge and Data Engineering}, vol.~22, no.~10, pp.
  1345--1359, 2010.

\bibitem{bocsi2013alignment}
B.~Bocsi, L.~Csat{\'o}, and J.~Peters, ``Alignment-based transfer learning for
  robot models,'' in \emph{The 2013 International Joint Conference on Neural
  Networks (IJCNN)}, 2013, pp. 1--7.

\bibitem{Lawrence:2002}
N.~Lawrence, M.~Seeger, and R.~Herbrich, ``Fast sparse {G}aussian process
  methods: The informative vector machine,'' in \emph{Proceedings of the 15th
  International Conference on Neural Information Processing Systems}, ser.
  NIPS'02.\hskip 1em plus 0.5em minus 0.4em\relax Cambridge, MA, USA: MIT
  Press, 2002, pp. 625--632.

\bibitem{NIPS2000_1880}
A.~J. Smola and P.~L. Bartlett, ``Sparse greedy {G}aussian process
  regression,'' in \emph{Advances in Neural Information Processing Systems 13},
  T.~K. Leen, T.~G. Dietterich, and V.~Tresp, Eds.\hskip 1em plus 0.5em minus
  0.4em\relax MIT Press, 2001, pp. 619--625.

\bibitem{Snelson06sparsegaussian}
E.~Snelson and Z.~Ghahramani, ``Sparse gaussian processes using
  pseudo-inputs,'' in \emph{Advances in Neural Information Processing
  Systems}.\hskip 1em plus 0.5em minus 0.4em\relax MIT press, 2006, pp.
  1257--1264.

\bibitem{Tresp:2000}
V.~Tresp, ``A {B}ayesian committee machine,'' \emph{Neural Comput.}, vol.~12,
  no.~11, pp. 2719--2741, Nov. 2000.

\bibitem{Williams01usingthe}
C.~Williams and M.~Seeger, ``Using the nystr\"{o}m method to speed up kernel
  machines,'' in \emph{Advances in Neural Information Processing Systems
  13}.\hskip 1em plus 0.5em minus 0.4em\relax MIT Press, 2001, pp. 682--688.

\bibitem{AE-MY-JH:11}
A.~Ranganathan, M.~H. Yang, and J.~Ho, ``Online sparse {G}aussian process
  regression and its applications,'' \emph{IEEE Transactions on Image
  Processing}, vol.~20, no.~2, pp. 391--404, Feb 2011.

\bibitem{Csato:2002}
L.~Csat\'{o} and M.~Opper, ``Sparse on-line {G}aussian processes,''
  \emph{Neural Comput.}, vol.~14, no.~3, pp. 641--668, Mar. 2002.

\bibitem{nguyen2011incremental}
D.~Nguyen-Tuong and J.~Peters, ``Incremental online sparsification for model
  learning in real-time robot control,'' \emph{Neurocomputing}, vol.~74,
  no.~11, pp. 1859--1867, 2011.

\bibitem{gijsberts2011incremental}
A.~Gijsberts and G.~Metta, ``Incremental learning of robot dynamics using
  random features,'' in \emph{IEEE International Conference on Robotics and
  Automation (ICRA)}, 2011, pp. 951--956.

\bibitem{rahimi2007random}
A.~Rahimi and B.~Recht, ``Random features for large-scale kernel machines,'' in
  \emph{Advances in neural information processing systems}, 2007, pp.
  1177--1184.

\bibitem{SEMIPARAMTERIC_2016}
R.~Camoriano, S.~Traversaro, L.~Rosasco, G.~Metta, and F.~Nori, ``Incremental
  semiparametric inverse dynamics learning,'' in \emph{2016 IEEE International
  Conference on Robotics and Automation (ICRA)}, May 2016, pp. 544--550.

\bibitem{romeres2016onlineIcub}
D.~Romeres, M.~Zorzi, R.~Camoriano, and A.~Chiuso, ``Online semi-parametric
  learning for inverse dynamics modeling,'' in \emph{Decision and Control
  (CDC), 2016 IEEE 55th Conference on}.\hskip 1em plus 0.5em minus 0.4em\relax
  IEEE, 2016, pp. 2945--2950.

\bibitem{quinonero2005unifying}
J.~Qui\~{n}onero Candela and C.~E. Rasmussen, ``A unifying view of sparse
  approximate {G}aussian process regression,'' \emph{The Journal of Machine
  Learning Research}, vol.~6, pp. 1939--1959, 2005.

\bibitem{JH-SS:10}
J.~Hartikainen and S.~S{\"a}rkk{\"a}, ``Kalman filtering and smoothing
  solutions to temporal {Gaussian process regression models},'' in \emph{2010
  IEEE International Workshop on Machine Learning for Signal Processing}, Aug
  2010, pp. 379--384.

\bibitem{NicFer:1998:IFA_1779}
G.~{De Nicolao}, G.~Ferrari-Trecate, and A.~Lecchini, ``{MAXENT priors for
  stochastic filtering problems},'' in \emph{Mathematical Theory of Networks
  and Systems}, Padova, Italy, Jul. 1998.

\bibitem{Huber201485}
M.~F. Huber, ``Recursive {G}aussian process: On-line regression and learning,''
  \emph{Pattern Recognition Letters}, vol.~45, pp. 85 -- 91, 2014.

\bibitem{JP-MS:14}
J.~Pr{\"u}her and M.~Simandl, ``{G}aussian process based recursive system
  identification,'' \emph{Journal of Physics: Conference Series}, vol. 570,
  no.~1, 2014.

\bibitem{romeres2016online}
D.~Romeres, G.~Prando, G.~Pillonetto, and A.~Chiuso, ``Online {B}ayesian system
  identification,'' in \emph{Proc. of ECC}, 2016.

\bibitem{prando2016online}
G.~Prando, D.~Romeres, and A.~Chiuso, ``Online identification of time-varying
  systems: a {B}ayesian approach,'' in \emph{Proc. of IEEE CDC}, 2016.

\bibitem{ByrnesIW1991}
C.~I. Byrnes, A.~Isidori, and J.~C. Willems, ``Passivity, feedback equivalence,
  and the global stabilization of minimum phase nonlinear systems,'' \emph{IEEE
  Transactions on Automatic Control}, vol.~36, no.~11, pp. 1228--1240, Nov
  1991.

\bibitem{taylor2005classical}
J.~Taylor, \emph{Classical Mechanics}.\hskip 1em plus 0.5em minus 0.4em\relax
  University Science Books, 2005.

\bibitem{GP-MHQ-AC:11}
G.~Pillonetto, M.~H. Quang, and A.~Chiuso, ``A new kernel-based approach for
  nonlinear system identification,'' \emph{IEEE Transactions on Automatic
  Control}, vol.~56, no.~12, pp. 2825--2840, Dec 2011.

\bibitem{RUSSU2012189}
\BIBentryALTinterwordspacing
A.~Russu, G.~D. Nicolao, I.~Poggesi, M.~Neve, and R.~Gomeni, ``{B}ayesian
  population approaches to the analysis of dose escalation studies,''
  \emph{Computer Methods and Programs in Biomedicine}, vol. 107, no.~2, pp. 189
  -- 201, 2012. [Online]. Available:
  \url{http://www.sciencedirect.com/science/article/pii/S0169260711001519}
\BIBentrySTDinterwordspacing

\bibitem{papoulis2002probability}
A.~Papoulis and S.~U. Pillai, \emph{Probability, random variables, and
  stochastic processes}.\hskip 1em plus 0.5em minus 0.4em\relax Tata
  McGraw-Hill Education, 2002.

\bibitem{wahba1990spline}
G.~Wahba, \emph{Spline models for observational data}.\hskip 1em plus 0.5em
  minus 0.4em\relax Siam, 1990, vol.~59.

\bibitem{Ljung:99}
L.~Ljung, \emph{System Identification - Theory for the User}, 2nd~ed.\hskip 1em
  plus 0.5em minus 0.4em\relax Upper Saddle River, N.J.: Prentice-Hall, 1999.

\bibitem{RPEM}
F.~Fraccaroli, A.~Peruffo, and M.~Zorzi, ``A new recursive least squares method
  with multiple forgetting schemes,'' in \emph{54th IEEE Conference on Decision
  and Control (CDC)}, 2015, pp. 3367--3372.

\bibitem{bjorck96}
\BIBentryALTinterwordspacing
A.~Bj\"orck, \emph{Numerical Methods for Least Squares Problems}.\hskip 1em
  plus 0.5em minus 0.4em\relax Society for Industrial and Applied Mathematics,
  1996. [Online]. Available:
  \url{http://epubs.siam.org/doi/abs/10.1137/1.9781611971484}
\BIBentrySTDinterwordspacing

\bibitem{james2013introduction}
G.~James, D.~Witten, T.~Hastie, and R.~Tibshirani, \emph{An introduction to
  statistical learning}.\hskip 1em plus 0.5em minus 0.4em\relax Springer, 2013,
  vol. 112.

\bibitem{kozlowski2012modelling}
K.~R. Kozlowski, \emph{Modelling and identification in robotics}.\hskip 1em
  plus 0.5em minus 0.4em\relax Springer Science \& Business Media, 2012.

\bibitem{BSL_JOURNAL}
M.~Zorzi and A.~Chiuso, ``{S}parse plus {L}ow rank {N}etwork {I}dentification:
  A {N}onparametric {A}pproach,'' \emph{Automatica}, vol.~53, 2017.

\bibitem{metta2010icub}
G.~Metta, L.~Natale, F.~Nori, G.~Sandini, D.~Vernon, L.~Fadiga, C.~Von~Hofsten,
  K.~Rosander, M.~Lopes, J.~Santos-Victor \emph{et~al.}, ``The icub humanoid
  robot: An open-systems platform for research in cognitive development,''
  \emph{Neural Networks}, vol.~23, no.~8, pp. 1125--1134, 2010.

\bibitem{janabi2000discrete}
F.~Janabi-Sharifi, V.~Hayward, and C.-S. Chen, ``Discrete-time adaptive
  windowing for velocity estimation,'' \emph{IEEE Transactions on control
  systems technology}, vol.~8, no.~6, pp. 1003--1009, 2000.

\bibitem{ivaldi2011computing}
S.~Ivaldi, M.~Fumagalli, M.~Randazzo, F.~Nori, G.~Metta, and G.~Sandini,
  ``Computing robot internal/external wrenches by means of inertial, tactile
  and f/t sensors: theory and implementation on the icub,'' in \emph{Humanoid
  Robots (Humanoids), 2011 11th IEEE-RAS International Conference on}, 2011,
  pp. 521--528.

\bibitem{pattacini2010experimental}
U.~Pattacini, F.~Nori, L.~Natale, G.~Metta, and G.~Sandini, ``An experimental
  evaluation of a novel minimum-jerk cartesian controller for humanoid
  robots,'' \emph{IEEE/RSJ International Conference on Intelligent Robots and
  Systems}, pp. 1668--1674, 2010.

\bibitem{nori2015icub}
F.~Nori, S.~Traversaro, J.~Eljaik, F.~Romano, A.~Del~Prete, and D.~Pucci,
  ``icub whole-body control through force regulation on rigid non-coplanar
  contacts,'' \emph{Frontiers in Robotics and AI}, p.~18, 2015.

\bibitem{tacchetti2013gurls}
A.~Tacchetti, P.~K. Mallapragada, M.~Santoro, and L.~Rosasco, ``{GURLS}: A
  least squares library for supervised learning,'' \emph{Journal of Machine
  Learning Research}, vol.~14, pp. 3201--3205, 2013.

\bibitem{maciejowski2002predictive}
J.~M. Maciejowski, \emph{Predictive control: with constraints}.\hskip 1em plus
  0.5em minus 0.4em\relax Pearson education, 2002.

\bibitem{Pillonetto2015106}
G.~Pillonetto and A.~Chiuso, ``Tuning complexity in regularized kernel-based
  regression and linear system identification: The robustness of the marginal
  likelihood estimator,'' \emph{Automatica}, vol.~58, pp. 106 -- 117, 2015.

\bibitem{TraversaroBEN16}
S.~Traversaro, S.~Brossette, A.~Escande, and F.~Nori, ``Identification of fully
  physical consistent inertial parameters using optimization on manifolds,'' in
  \emph{2016 IEEE/RSJ International Conference on Intelligent Robots and
  Systems (IROS)}, Oct 2016, pp. 5446--5451.

\end{thebibliography}

\end{document}